        \pgfplotsset{compat=1.18}
	\newtheorem{theorem}{Theorem}
	\newtheorem{lemma}{Lemma}
        \newtheorem{proposition}{Proposition}
	\theoremstyle{definition}
	\theoremstyle{remark}
        \newtheorem{assumption}{Assumption}
\newcommand{\BEAS}{\begin{eqnarray*}}
	\newcommand{\EEAS}{\end{eqnarray*}}
\newcommand{\BEA}{\begin{eqnarray}}
	\newcommand{\EEA}{\end{eqnarray}}
\newcommand{\BEQ}{\begin{equation}}
	\newcommand{\EEQ}{\end{equation}}
\newcommand{\BIT}{\begin{itemize}}
	\newcommand{\EIT}{\end{itemize}}
\newcommand{\ie}{{\it i.e.}}
\newcommand{\ones}{\mathbf 1}
\newcommand{\Rank}{\mathop{\bf rank}}
\newcommand{\Prob}{\mathop{\mathbb{P}}}
\newcommand{\Expect}{\mathop{\mathbb{E}}}
\newcommand{\Co}{\mathop {\bf conv}} 
\newcounter{exno}
\newlength{\exlabelwidth}
\long\def\@makecaption#1#2{
	\vskip 9pt 
	\begin{small}
		\setbox\@tempboxa\hbox{{\sffamily\bfseries #1} #2}
		\ifdim \wd\@tempboxa > 0.85\textwidth
		\begin{center}
			\begin{minipage}[t]{0.85\textwidth}
				\addtolength{\baselineskip}{-0.95pt}
				{\sffamily\bfseries #1} #2 \par
				\addtolength{\baselineskip}{0.95pt}
			\end{minipage}
		\end{center}
		\else 
		\hbox to\hsize{\hfil\box\@tempboxa\hfil}  
		\fi
	\end{small}\par
}
\newcounter{oursection}
\newcounter{lecture}
\newcommand{\calC}{{\mathcal C}}
\newcommand{\calD}{{\mathcal D}}
\newcommand{\calF}{{\mathcal F}}
\newcommand{\calR}{{\mathcal R}}
\newcommand{\N}{{\mathbb{N}}}
\newcommand{\reals}{{\mathbb{R}}}
\newcommand{\thetahat}{{\widehat{\theta}}}
\newcommand{\muhat}{{\widehat{\mu}}}
\newcommand{\st}{\mathrm{subject\: to\:}}
\DeclareMathOperator*{\maximize}{maximize}
\newcommand{\minimize}{\mathrm{minimize}\:}
	\def\BibTeX{{\rm B\kern-.05em{\sc i\kern-.025em b}\kern-.08em
	    T\kern-.1667em\lower.7ex\hbox{E}\kern-.125emX}}
         \definecolor{DarkGreen}{RGB}{0,100,0}
\newcounter{edremcounter}
\definecolor{darkred}{RGB}{150,0,0}
\definecolor{darkgreen}{RGB}{0,150,0}
\definecolor{darkblue}{RGB}{0,0,150}
\begin{document}
		
		\title{Convex Methods for Constrained Linear Bandits
		}

            \author{
    Amirhossein Afsharrad$^1$,
    Ahmadreza Moradipari$^2$,
    Sanjay Lall$^1$
    \thanks{$^1$ Department of Electrical Engineering, Stanford University, 
    \texttt{\{afsharrad,lall\}@stanford.edu}.
    $^2$ Department of Electrical and Computer Engineering, University of California Santa Barbara, \texttt{ahmadreza$\_$moradipari@ucsb.edu}.
    This work was supported by NSF under ECCS CPS project number 2125511.
    }
}

		
		\maketitle

\begin{abstract}


Recently, bandit optimization has received significant attention in real-world safety-critical systems that involve repeated interactions with humans. 
While there exist various algorithms with performance guarantees in the literature, practical implementation of the algorithms has not received as much attention.  
This work presents a comprehensive study on the computational aspects of  safe bandit algorithms, specifically safe linear bandits, by introducing a framework that leverages convex programming tools to create computationally efficient policies. In particular, 
we first characterize the properties of the optimal policy for safe linear bandit problem and then propose an end-to-end pipeline of safe linear bandit algorithms that only involves solving convex problems. We also numerically evaluate the performance of our proposed methods.



\end{abstract}

\section{Introduction}

Recently, bandit optimization has received significant attention in real-world cyber-physical systems that involve repeated interactions with humans. In such cases, a learner repeatedly interacts with an unknown environment. During each interaction, it selects an action from a given action set and observes its corresponding reward. The learner's goal is to maximize the accumulated reward. However, these systems are bound by safety constraints that must be respected during these interactions. Consequently, traditional bandit algorithms may not be directly applicable in these contexts. Indeed, proper and nontrivial modifications are necessary to enable the use of bandit algorithms in safety-critical systems. To achieve this, new research directions have emerged, focusing on designing constraint bandit algorithms with provable guarantees. In these settings, the environment is subject to a set of unknown operational constraints. Depending on the nature of these constraints, various constrained stochastic bandit settings have been formulated and analyzed. In our work, we concentrate on the linear stochastic bandit problem that is constrained by a set of unknown linear constraints.

A linear bandit (LB) is a variant of the multi-armed bandit (MAB) problem
in which each action is associated with a feature vector $x$ and the expected reward of playing each action is equal to the inner product of its feature vector and an unknown parameter vector $\theta_*$. Two efficient approaches have been developed for LB: linear UCB (LUCB) \cite{Dani08stochasticlinear,Tsitsiklis,abbasi2011improved} and linear Thompson sampling \cite{agrawal2013thompson, abeille2017linear}.  A diverse body of related works on linear stochastic bandits has considered the effect of safety constraints that need to be respected during all the rounds of the algorithm. 
An algorithm is called stage-wise safe if the safety constraint is not
violated with high probability over all rounds. Such algorithms have
been proposed for for linear UCB \cite{amani2019linear} and for linear
Thompson sampling~\cite{moradipari2021safe, moradipari2020linear}.
In the more relaxed setting, where the algorithm is allowed to violate the safety constraint for some limited rounds, \cite{chen2022strategies} has proposed safe algorithms with a provable upper bound on the total number constraint violations. 
Our setting is inspired by the work of \cite{pacchiano2021stochastic}, where the agent's objective is to produce a series of policies that yield the highest expected cumulative reward, all the while maintaining that the expected cost of the policy constructed in each round stays below a specified threshold.

In this work, we investigate the computational aspects of safe linear bandit algorithms. Various methods have been developed as shown in~\cite{pacchiano2021stochastic, moradipari2021safe, varma2023stochastic, khezeli2020safe, hutchinson2023impact,chen2022doubly}, which produce policies with precise performance guarantees. In this paper, we utilize convex programming tools to build a framework using these algorithms, allowing for explicit computation of policies. We aim to address two main challenges. First, standard methods require solving a non-convex optimization problem at each time step of the bandit algorithms. This poses a computational challenge, as finding a globally efficient solution for this class of problems can become NP-hard in certain cases, as noted in~\cite{Dani2008StochasticLO}. Second, standard algorithms necessitate optimization over a set of probability distributions. While straightforward for convex decision sets, the complexity is dependent on the form of the decision set and can pose challenges for some non-convex decision sets. Our primary contribution is an end-to-end pipeline of algorithms for constrained bandits with performance guarantees, which only involve solving convex optimization problems. This ensures computational efficiency as all the algorithms can be efficiently implemented using only a convex solver. In order to address the second aforementioned challenge, we focus on decision sets that are a union of convex sets, each described by convex inequalities.

The rest of the paper is organized as follows: Section \ref{sec:prelim} presents some preliminary material. In Section \ref{sec:prob_formulation} we state the formal version of the problem we are addressing. In Section \ref{sec:opt_policy}, we provide characteristics of an optimal policy, offering insight into what one might expect from such a policy, and propose a method to compute such a policy. Section \ref{sec:l1_oplb} introduces a general computationally efficient algorithm with performance guarantees to address the constrained bandit problem. In Section \ref{sec:linear_relaxation} we propose a novel problem-dependent approach that improves the performance bound of the previous section and can achieve optimal performance for specific classes of problems. Section \ref{sec:experiments} presents experiments that illustrate the performance of our methods.

\section{Preliminaries}\label{sec:prelim}

Before delving into the main problem formulation and our results, we introduce a set of definitions and lemmas in this section. 

\textbf{Norms.} For a vector \(x \in \mathbb{R}^d\) and
positive definite matrix $\Sigma \in \reals^{d\times d}$ we define
    \[\|x\|_{\Sigma,p} = \|\Sigma^{1/2} x\|_p,\]
In     particular, for \(p = 2\), we have
    \[\|x\|_{\Sigma,2} = \sqrt{x^\top \Sigma x}.\]

    \begin{lemma}[Caratheodory’s theorem]\label{lemma:caratheodory}
        Every point in the convex hull of a set \( S\subset\reals^d \)
        can be expressed as a convex combination of at most \( d+1 \) points from \( S \).
    \end{lemma}

    \begin{lemma}[Linear program basic feasible solution]\label{lemma:bfs}
        The linear program 
        \begin{align*}
        \begin{split}
        \maximize\quad& c^\top x \\
        \st \quad& Ax=b\\
        & x\geq 0 \\
    \end{split}
    \end{align*}
    has a solution with at most \( p \) non-zero entries, where \( A\in\reals^{p\times q} \) is a fat full-rank matrix. This solution is called a basic feasible solution.
    \end{lemma}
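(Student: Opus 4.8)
The plan is to prove the claim by a support-reduction argument: starting from any optimal solution, I would repeatedly eliminate nonzero coordinates without changing the objective value or leaving the feasible set, until the columns of $A$ indexing the surviving nonzero coordinates become linearly independent. Since $A$ has rank $p$, at most $p$ of its columns can be linearly independent, and this forces the support of the final solution to have size at most $p$.

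First I would assume the LP admits an optimal solution $x^\star$ (i.e., it is feasible and the objective is bounded above), and denote by $S = \{i : x^\star_i > 0\}$ its support. If $|S| \le p$ we are done, so suppose $|S| > p$. The columns $\{A_i\}_{i \in S}$ are then more than $p$ vectors living in $\mathbb{R}^p$, hence linearly dependent, so there exists a nonzero $d \in \mathbb{R}^q$ supported on $S$ with $Ad = 0$.

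Next I would use optimality to argue that $c^\top d = 0$. Because $d$ is supported on $S$ and $x^\star_i > 0$ for every $i \in S$, the perturbed point $x^\star + t d$ stays nonnegative for all sufficiently small $|t|$ and satisfies $A(x^\star + td) = b$, so it is feasible; its objective value is $c^\top x^\star + t\, c^\top d$. If $c^\top d \neq 0$, one of the two directions $\pm d$ would strictly increase the objective while remaining feasible for small $|t|$, contradicting optimality of $x^\star$. Hence $c^\top d = 0$, and moving along $d$ preserves both feasibility (for small steps) and optimality.

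Finally, since $d \neq 0$, I would move in whichever sign of $t$ makes at least one coordinate of $S$ decrease, taking the largest admissible step $t^\star$, determined by the minimal ratio $x^\star_i / |d_i|$ over the coordinates that are decreasing, at which point the first such coordinate reaches zero. The resulting point is feasible, still optimal, and has strictly smaller support. Iterating this elimination terminates in at most $|S| - p$ steps and produces an optimal solution whose support has size at most $p$, the claimed basic feasible solution. The one place to be careful is the step-size and sign bookkeeping in this last step, namely ensuring a coordinate genuinely hits zero while nonnegativity is maintained; this is routine once $c^\top d = 0$ is in hand, and the same reduction applied while ignoring the objective shows that any feasible point, not only an optimal one, can be reduced to one with at most $p$ nonzero entries.
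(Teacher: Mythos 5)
Your argument is the standard support-reduction (rank-reduction) proof of the existence of basic optimal solutions, and it is correct: when $|S|>p$ the columns $\{A_i\}_{i\in S}$ are dependent, the null direction $d$ must satisfy $c^\top d=0$ by optimality, and a minimal-ratio step along the appropriate sign of $d$ kills at least one support element while preserving feasibility and objective value, so the process terminates with support size at most $p$. The paper states this lemma as a known preliminary fact and gives no proof of its own, so there is nothing to contrast with; your write-up, including the explicit caveat that the LP must be assumed feasible and bounded for an optimal solution to exist, is a complete and standard justification of exactly what the paper invokes.
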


    \begin{lemma}[Convex hull of the union of convex sets \cite{boyd_vandenberghe_2004}]\label{lemma:union_of_convex}
    Consider the problem
    \begin{align}\label{eq:conv_hull_prob_original}
    \begin{split}
        \minimize \quad &f_0(z) \\
        \st \quad& \Co\left(\bigcup_{i=1}^k \calD^i\right)\\
    \end{split}
    \end{align}
    where 
    \[\calD^i=\{x:f_{ij}(x)\leq 0, \: j=1,\cdots, k_i\}\]
    and each $f_{ij}:\reals^d\to\reals$ is convex.
    
    An approach to solving this problem is to solve the convex program
    \begin{align}\label{eq:conv_hull_prob}
    \begin{split}
        \minimize \quad &f_0(z) \\
        \st \quad& \alpha_i f_{ij}(x_i/\alpha_i)\leq 0, \quad i\in[k], j\in[k_i]\\
        & \ones^\top \alpha = 1 \\
        & \alpha \geq 0 \\
        & z = x_1 + \cdots + x_k
    \end{split}
    \end{align}
    over the variables $z, x_1, \cdots, x_k \in \reals^d$ and $\alpha_1,\cdots, \alpha_k\in \reals$. If \((z^\star, x_1^\star, \cdots, x_k^\star, \alpha_1^\star, \cdots, \alpha_k^*)\) is an optimal solution of \eqref{eq:conv_hull_prob}, then \(z^*\) is an optimal solution of \eqref{eq:conv_hull_prob_original}.
\end{lemma}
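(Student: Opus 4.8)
The plan is to prove the lemma by showing that the feasible set of problem \eqref{eq:conv_hull_prob_original}, namely $\Co\left(\bigcup_{i=1}^k \calD^i\right)$, coincides exactly with the set of $z$ that are attainable in the convex program \eqref{eq:conv_hull_prob}. Since both problems minimize the same objective $f_0$, and $f_0$ depends only on $z$, equality of these feasible sets immediately yields equality of the optimal values and shows that any optimal $z^\star$ of \eqref{eq:conv_hull_prob} is also optimal for \eqref{eq:conv_hull_prob_original}. Thus the entire argument reduces to a set-equality between two descriptions of the same convex hull.

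First I would establish a clean representation of the convex hull. By Lemma~\ref{lemma:caratheodory}, every point of $\Co\left(\bigcup_{i=1}^k \calD^i\right)$ is a convex combination of finitely many points drawn from $\bigcup_{i=1}^k \calD^i$. Grouping those points according to which set $\calD^i$ they belong to, and using the convexity of each $\calD^i$ to replace the points coming from a single group by their (renormalized) weighted average, I can write every point of the hull as $z = \sum_{i=1}^k \alpha_i u_i$ with $u_i \in \calD^i$, $\alpha \geq 0$, and $\ones^\top \alpha = 1$, where $u_i$ is irrelevant whenever $\alpha_i = 0$. The reverse inclusion is immediate, so this gives an exact parametrization $\Co\left(\bigcup_{i=1}^k \calD^i\right) = \{\sum_{i=1}^k \alpha_i u_i : u_i \in \calD^i,\ \alpha \geq 0,\ \ones^\top\alpha = 1\}$.

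Next I would connect this representation to \eqref{eq:conv_hull_prob} through the substitution $x_i = \alpha_i u_i$. When $\alpha_i > 0$, the membership constraints $f_{ij}(u_i) \leq 0$ become $\alpha_i f_{ij}(x_i/\alpha_i) \leq 0$, which are precisely the perspective-function constraints in \eqref{eq:conv_hull_prob}; because the perspective of a convex function is jointly convex in $(x_i, \alpha_i)$, these constraints together with the affine relations $\ones^\top\alpha = 1$, $\alpha \geq 0$, and $z = x_1 + \cdots + x_k$ also certify that \eqref{eq:conv_hull_prob} is genuinely a convex program. Under this substitution $z = \sum_{i=1}^k x_i = \sum_{i=1}^k \alpha_i u_i$, so on the region where all $\alpha_i > 0$ the two descriptions of the feasible $z$ agree.

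I expect the main obstacle to be the boundary case in which some $\alpha_i = 0$, where the expression $\alpha_i f_{ij}(x_i/\alpha_i)$ is not defined pointwise and must be interpreted through the closure of the perspective. The delicate step is to verify that setting $\alpha_i = 0$ forces the associated $x_i$ into the correct limiting region, namely $x_i$ in the recession cone of $\calD^i$ (and in particular $x_i = 0$ whenever $\calD^i$ is bounded), so that the feasible set of \eqref{eq:conv_hull_prob} is neither strictly larger nor strictly smaller than $\Co\left(\bigcup_{i=1}^k \calD^i\right)$. Once the feasible sets are shown to coincide, including this boundary behavior, the conclusion that $z^\star$ solves \eqref{eq:conv_hull_prob_original} follows directly from the fact that the two programs share the same $z$-dependent objective.
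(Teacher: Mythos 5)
The paper does not actually prove this lemma---it is quoted from \cite{boyd_vandenberghe_2004}---so there is no internal proof to compare against; your proposal is the standard argument behind that cited result, and its overall structure (parametrize $\Co\bigl(\bigcup_i \calD^i\bigr)$ as $\{\sum_i \alpha_i u_i : u_i\in\calD^i,\ \alpha\ge 0,\ \ones^\top\alpha=1\}$ via Caratheodory plus convexity of each $\calD^i$, then substitute $x_i=\alpha_i u_i$ to land on the perspective constraints) is correct. The one issue is that you name the delicate step---the interpretation of $\alpha_i f_{ij}(x_i/\alpha_i)$ at $\alpha_i=0$---but leave it as a to-do, and this is precisely where the lemma, as literally stated, can fail. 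Under the closed-perspective convention, $\alpha_i=0$ admits any $x_i$ in the recession cone of $\calD^i$, and a nonzero recession direction can produce a feasible $z$ in \eqref{eq:conv_hull_prob} that lies in the \emph{closure} of the convex hull but not in the hull itself (e.g.\ $\calD^1=\{0\}$ and $\calD^2=\{(u,v): u>0,\ uv\ge 1\}$ in $\reals^2$: the point $(1,0)$ is reachable with $\alpha_2=0$, $x_2=(1,0)$, yet is not in $\Co(\calD^1\cup\calD^2)$). So to finish the proof you must either (i) adopt the convention that $\alpha_i=0$ forces $x_i=0$ (the non-closed perspective), in which case the two feasible sets coincide exactly and your set-equality argument closes, or (ii) invoke boundedness of the $\calD^i$, under which every recession cone is $\{0\}$ and the closure of the hull equals the hull. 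In the paper's setting option (ii) is available because the decision sets are assumed bounded, but one of these two resolutions must be stated explicitly rather than deferred; with that addition your proof is complete.
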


\section{Problem Formulation}\label{sec:prob_formulation}
\textbf{Initial setup.} We consider the linear bandit with linear constraints characterized by the reward parameter \(\theta_*\in\reals^d\) and the cost parameter \(\Gamma_*\in\reals^{m\times d}\). In each round \(t\), the agent is given a decision set \(\calD_t\subset \reals^d\) from which it has to choose an action \(x_t\). We assume that \( \calD_t \) is the union of \(n_t\) convex sets \(\calD_t^1,\cdots, \calD_t^{n_t}\), each of which being described via convex inequalities, \ie, 
\begin{align}\label{eq:union_of_convex}
    \begin{split}
        \calD_t^i&=\left\{x:f_t^{ij}(x)\leq 0, \: j=1,\cdots, k_t^i\right\}\\
        \calD_t&=\bigcup_{i=1}^{n_t}\calD_t^i.
    \end{split}
\end{align}
Upon taking action \(x_t\in \calD_t\), the agent observes a reward signal \(r_t=\theta_*^\top x_t+\eta_t^r\) and a cost signal vector \(c_t=\Gamma_* x_t + \eta_t^c\), where \(\eta_t^r\in \reals\) and \(\eta_t^c\in\reals^m\) are random variables of reward and cost noise, satisfying conditions that will be specified later. The agent selects its action \(x_t\in\calD_t\) in each round \(t\) according to its policy \(\pi_t\in\Delta_{\calD_t}\) at that round, \ie, \(x_t \sim \pi_t\).

\textbf{Objective.} The objective of the agent is to generate a sequence of policies \(\{\pi_t\}_{t=1}^T\) maximizing the \textit{expected cumulative reward} over \(T\) rounds. This should be achieved while satisfying the \textit{linear constraints}
\begin{equation}\label{eq01}
    \mathbb{E}_{x\sim \pi_t}\left(\Gamma_* x\right)\leq \tau, \quad \forall t \in [T],
\end{equation}
where the $i$th row of \(\Gamma_*\) is represented by \(\mu_{*i}\). The vector \(\tau \in \mathbb{R}^m\) is termed the \textit{constraint threshold vector} and is known to the agent. Additionally, the vector inequality in \eqref{eq01} is interpreted element-wise.

Consequently, the policy \(\pi_t\) that the agent chooses in each round \(t \in [T]\) must reside within the set of feasible policies defined over the action set \(\calD_t\), \ie,
\begin{equation}
\Pi_t = \left\{ \pi \in \Delta_{\calD_t} : \Expect_{x \sim \pi} \left( \Gamma_* x \right) \leq \tau \right\}.
\end{equation}
Optimizing for the maximum expected cumulative reward over \(T\) rounds can be rephrased as minimizing the constrained pseudo-regret across \(T\) rounds
\begin{equation}
    \calR_{\Pi}(\theta_*, T) = \sum_{t=1}^T \Expect_{x \sim \pi^*_t} \left( \theta_*^\top x \right) - \Expect_{x \sim \pi_t} \left( \theta_*^\top x \right),
\end{equation}
where \(\pi_t, \pi^*_t \in \Pi_t\) for all \(t \in [T]\). Here, \(\pi^*_t\) signifies the \textit{optimal feasible policy} during round \(t\), defined as
\begin{equation}
\pi^*_t = \max_{\pi \in \Pi_t} \Expect_{x \sim \pi_t} \left[ \theta_*^\top x \right].
\end{equation}
It is worth emphasizing that \(\pi^*_t\) refers to the optimal \textit{omniscient} feasible policy, one that is achievable by an agent that is informed of the hidden parameters \(\theta_*\) and \(\Gamma_*\). This should be distinctly recognized from the best achievable policy by an agent observing only noisy rewards and costs.

\textbf{Assumptions.} We operate under the following assumptions in our setting, which are standard in the linear bandit literature.

\begin{assumption}
    The constraint parameter matrix \(\Gamma_*\in\reals^{m\times d}\) is fat and full-rank, \ie, \(m<d\) and \(\Rank(\Gamma_*)=m\).
\end{assumption}

\begin{assumption}
    For all \(t\in T\), the reward and cost noise random variables \(\eta_t^r\), \(\eta_t^c\)
    are conditionally \(R\)-sub-Gaussian, \ie,
    \begin{align*}
        &\Expect\left[\eta_t^r|\calF_{t-1}\right]=0, \quad \Expect\left[\exp\left(\alpha\eta_t^r\right)|\calF_{t-1}\right]\leq \exp\left(\alpha^2R^2/2\right), \\
        &\Expect\left[\eta_{t,i}^c|\calF_{t-1}\right]=0, \quad \Expect\left[\exp\left(\alpha\eta_{t,i}^c\right)|\calF_{t-1}\right]\leq \exp\left(\alpha^2R^2/2\right)
    \end{align*}
    for any \(\alpha\in\reals, i\in[m]\), where \(\calF_t\) is the filtration that includes all events \((x_{1:t+1}, \eta^r_{1:t}, \eta^c_{1:t})\) until the end of round \(t\).
\end{assumption}

\begin{assumption}
    There is a known constant \(S > 0\), such that \(\|\theta_*\|\leq S\) and \(\|\mu_{i*}\|\leq S^2\) for all \(i\in[m]\).
\end{assumption}

\begin{assumption}
    The decision set $\calD_t$ is bounded. Specifically,
    \(\max_{t\in[T]}\max_{x\in\calD_t}\|x\|\leq L\).
\end{assumption}

\begin{assumption}
    For all \(t\in [T]\) and \(x\in\calD_t\), the mean rewards and costs are bounded, \ie, \(\theta_*^\top x \in [0,1]\) and \(\mu_{i*}^\top x \in [0,1]\) for \(i\in[m]\).
\end{assumption}

\begin{assumption}
    There exists a universally safe action \(x_0 \in \calD_t\) for all \(t \in [T]\) associated with the cost vector \(c_0 \in \reals^m\). This means that \(\Gamma_* x_0 = c_0 < \tau\). For the sake of clarity, we assume that \(c_0 = 0\) and that its value is known. Extending this to the cases where \(c_0 \neq 0\) is known, or \(c_0\) is unknown, is straightforward. For further details on these scenarios, one can refer to \cite{pmlr-v130-pacchiano21a}.
\end{assumption}

\textbf{Summary.} To summarize, the problem data includes the reward vector \(\theta_*\), the constraint matrix \(\Gamma_*\), the constraint threshold vector \(\tau\), the problem horizon \(T\), the observation noise sub-Gaussian parameter \(R\), the reward and cost upper bound parameter \(S\), the known safe action \(x_0\), and the decision sets \(\mathcal{D}_1, \ldots, \mathcal{D}_T\), where each \(\mathcal{D}_t\) is characterized by a set of integers \(n_t, k_t^1, \ldots, k_t^{n_t}\) and a set of convex functions \(f_t^{i,j}\) with \(i \in [n_t], j \in [k_t^i]\).

Note that we are working within the specified class of decision sets, i.e., sets in the form of a union of convex sets each described by convex inequalities, exclusively for computational purposes. Nevertheless, it is important to highlight that our theoretical results and theorems remain valid for any arbitrary choice of decision sets.

\section{Main Results}
\subsection{The optimal feasible policy}\label{sec:opt_policy}
At each time step \( t \), the optimal feasible policy \( \pi^*_t \) is obtained by solving the following optimization problem:
\begin{align}\label{eq03}
    \begin{split}
        \maximize_{\pi\in\Delta_{\calD_t}}\quad &\Expect_{x\sim\pi}\left(\theta_*^\top x\right) \\
        \st \quad& \Expect_{x\sim\pi}\left(\Gamma_* x\right) \leq \tau
    \end{split}
\end{align}
While the reward and cost parameters \( \theta_* \) and \( \Gamma_* \) are unknown in the bandit setting, it is valuable to understand the structure of the optimal feasible policy 
\( \pi^*_t \) even when these parameters are known. Specifically, the optimization in \eqref{eq03} considers probability distributions over the decision set \( \calD_t \), and since \( \calD_t \) can be any arbitrary set, characterizing the optimal feasible policy can be a complex task. The subsequent theorem, an extension of Lemma 5 in \cite{pmlr-v130-pacchiano21a}, provides a characterization of the optimal feasible policy \( \pi^*_t \).

\begin{theorem}\label{thm_opt_policy}
    There exists an optimal feasible policy \( \pi^*_t \) that solves \eqref{eq03} with finite support of at most \( m+1 \) elements.
\end{theorem}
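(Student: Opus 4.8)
The plan is to view problem \eqref{eq03} through its moments and to combine the two preliminary lemmas. I would define the moment map $x \mapsto (\Gamma_* x, \theta_*^\top x) \in \reals^{m+1}$ and set $M = \{(\Gamma_* x, \theta_*^\top x) : x \in \calD_t\}$. As far as \eqref{eq03} is concerned, any policy $\pi \in \Delta_{\calD_t}$ is summarized entirely by its expected moment $\Expect_{x\sim\pi}(\Gamma_* x, \theta_*^\top x)$, which is a point of $\Co(M)$: the objective reads off the last coordinate, and feasibility asks the first $m$ coordinates to be $\leq \tau$. So I would argue in two stages — first reduce an arbitrary optimal policy to one with finite (but possibly slightly too large) support, and then shave the support down to $m+1$ using basic-feasible-solution structure.

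For the first stage, I would take any optimal feasible policy $\pi^*_t$ and consider its expected moment $(\bar c, \bar r) \in \Co(M) \subset \reals^{m+1}$. By Caratheodory's theorem (Lemma \ref{lemma:caratheodory}), this point is a convex combination of at most $m+2$ points of $M$, realized by atoms $x_1, \ldots, x_N$ with $N \leq m+2$ and weights $p_1, \ldots, p_N$. The discrete policy supported on these atoms has exactly the same expected cost $\bar c \leq \tau$ and exactly the same expected reward $\bar r$, so it is again feasible and optimal, now with finite support.

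For the second stage, I would freeze the atoms $x_1, \ldots, x_N$ and optimize only over the weights, which turns \eqref{eq03} into a finite linear program: maximize $\sum_i (\theta_*^\top x_i) p_i$ subject to $\sum_i (\Gamma_* x_i) p_i \leq \tau$, $\sum_i p_i = 1$, and $p \geq 0$. Introducing a slack vector $s \in \reals^m$, $s \geq 0$, to rewrite the $m$ cost inequalities as equalities $\sum_i (\Gamma_* x_i) p_i + s = \tau$, the equality system has exactly $m+1$ rows (the $m$ cost rows and the normalization row) and is full row rank, since the slack block contains an $m \times m$ identity while the normalization row vanishes on that block. Lemma \ref{lemma:bfs} then yields an optimal basic feasible solution $(p, s)$ with at most $m+1$ nonzero entries in total; in particular at most $m+1$ of the $p_i$ are nonzero. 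Since this basic feasible solution attains the LP optimum, which equals the original optimal value $\bar r$, the corresponding policy is optimal, feasible, and supported on at most $m+1$ atoms.

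The main obstacle is really bookkeeping rather than a genuine difficulty: I must count the slack variables carefully so that the bound of $m+1$ on the total number of nonzeros of $(p,s)$ correctly translates into an $m+1$ bound on the support of $\pi^*_t$, and I must verify the full-row-rank hypothesis of Lemma \ref{lemma:bfs} for the augmented system, without which the basic-feasible-solution count does not apply. I should also confirm that restricting to the frozen atoms does not lower the optimal value, which holds because the finite-support policy produced in the first stage is itself feasible for this restricted program and already achieves $\bar r$.
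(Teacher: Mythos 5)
Your proof is correct, and it follows the same two-lemma skeleton as the paper's: first use Caratheodory's theorem (Lemma~\ref{lemma:caratheodory}) to pass from an arbitrary optimal policy to one with finite support, then freeze the atoms and invoke the basic-feasible-solution count (Lemma~\ref{lemma:bfs}) to trim the support to $m+1$. The one genuine difference is where you apply Caratheodory: the paper works with $z=\Expect_{x\sim\pi}(x)\in\Co(\calD_t)\subseteq\reals^d$ and so obtains $d+1$ atoms before the LP step, whereas you work with the pushforward under the moment map $x\mapsto(\Gamma_* x,\theta_*^\top x)\in\reals^{m+1}$ and obtain only $m+2$ atoms. Your variant is slightly sharper and is independent of the ambient dimension $d$ (it would survive even if $\calD_t$ lived in an infinite-dimensional space), and your verification of the full-row-rank hypothesis for the slack-augmented system is a detail the paper elides. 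What the paper's $d$-dimensional formulation buys in exchange is the explicit optimization problem \eqref{eq04} over $z\in\Co(\calD_t)$, which is reused directly in Algorithm~\ref{alg1} and in Lemma~\ref{lemma:union_of_convex} to actually compute the policy; your moment-space reduction proves existence cleanly but does not by itself hand you that computational pipeline. Both arguments share the same unaddressed subtlety, namely that an optimal policy (equivalently, an attained maximizer $z^*$ or $(\bar c,\bar r)$) is assumed to exist, so this is not a gap relative to the paper.
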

\begin{proof}
    First, observe that while \eqref{eq03} is an optimization over all choices of distributions \( \pi\in\calD_t \), the only component of \( \pi \) that plays a role in the optimization is \( \Expect_{x\sim\pi}(x) \). Thus, letting \( z=\Expect_{x\sim\pi}(x) \), solving \eqref{eq03} is equivalent to first solving 
    \begin{align}\label{eq04}
    \begin{split}
        \maximize_{z}\quad &\theta^\top z \\
        \st \quad& \Gamma z \leq \tau\\
        & z\in \Co(\calD_t)
    \end{split}
    \end{align}
    to find a solution \( z^* \), and then find a distribution \( \pi^*_t\in\Delta_{\calD_t} \) such that \( \Expect_{x\sim\pi^*_t}(x)=z^* \). Note that the constraint \( z\in \Co(\calD_t) \) has to be included in the new optimization problem since if \( z^*\notin \Co(\calD_t) \), then there is no distribution \( \pi\in\Delta_{\calD_t} \) whose expected value is \( z^* \).

    Now, let \( z^* \) be the solution of \eqref{eq04}. Since \( z \in \Co(\calD_t) \), we know that \( z \) is given by a convex combination of a finite number of elements in \( \calD_t \). Moreover, according to Caratheodory's theorem presented in Lemma \ref{lemma:caratheodory}, one such convex combination exists with at most \(d+1\) points. Thus, a set of points \( z_1, \cdots, z_{d+1}\in\calD_t \) and a set of non-negative scalars \( \alpha_1,\cdots, \alpha_{d+1} \) exist such that \( z^*=\sum_{i=1}^{d+1} \alpha_i z_i=Z\alpha \) and \( \sum_{i=1}^{d+1} \alpha_i =1 \), where \( Z\in\reals^{d\times (d+1)} \) is a matrix whose \( i \)th column is \( z_i \) and \( \alpha\in\reals^{d+1} \) is a vector whose \( i \) entry is \( \alpha_i \). Next, we form the following optimization problem:
    \begin{align}\label{eq:opt_policy_LP}
    \begin{split}
        \maximize_{\beta\in\reals^{d+1}}\quad &\theta^\top Z\beta \\
        \st \quad& \Gamma Z\beta \leq \tau\\
        & \ones^\top \beta = 1 \\
        & \beta \geq 0
    \end{split}
    \end{align}
    Note that if \( \beta \) is a solution of \eqref{eq:opt_policy_LP}, then \( z_{\beta}=Z\beta \) is a solution of \eqref{eq04}. The final step would be to show that a specific solution \( \beta^* \) for \eqref{eq:opt_policy_LP} exists with at most \( m+1 \) non-zero entries. This step is taken via Lemma \ref{lemma:bfs}, according to which \eqref{eq:opt_policy_LP} has a basic feasible solution that has no more than \(m+1\) non-zero elements. Note that \eqref{eq:opt_policy_LP} can be converted to the form given by Lemma \ref{lemma:bfs} by adding slack variables. Now, letting \( \beta^* \) be a basic feasible solution of \eqref{eq:opt_policy_LP}, the optimal feasible policy \( \pi^*_t \) with a support of at most \( m+1 \) elements is given by 
    \begin{equation}\label{eq06}
        \Prob_{x\sim\pi^*_t}(x=z)=\begin{cases}
            \beta^*_i & z=z_i \\
            0 & \text{otherwise}
        \end{cases}
    \end{equation}
    where \( z_i \) is the \( i \)th column of \( Z \).
    This completes the proof.
\end{proof}
The proof of Theorem~\ref{thm_opt_policy} provides a straightforward algorithm to compute the optimal feasible policy \(\pi^*_t\) given \(\theta_*\) and \(\Gamma_*\). Algorithm~\ref{alg1} provides the steps to achieve this goal.

\begin{algorithm}[H]
    \caption{Computation of the optimal feasible policy}\label{alg1}
    \begin{algorithmic}[1]
        \Statex \textbf{Input:} $\theta_*\in\reals^d, \Gamma_*\in\reals^{m\times d}, \tau\in\reals^d_+$
        \State\label{alg1l1} Solve \eqref{eq04} and find $z^*\in\Co\left(\calD_t\right)$
        \State\label{alg1l2} Find $Z=\begin{bmatrix} z_1 \cdots z_{d+1} \end{bmatrix} \in \reals^{d\times (d+1)}$ and $\alpha\in\reals^d_+$ such that $z^*=Z\alpha$ and $\ones^\top \alpha = 1$
        \State\label{alg1l3} Find $\beta^*$, a basic feasible solution of \eqref{eq:opt_policy_LP}
        \State\label{alg1l4} \Return $\pi^*_t$ according to \eqref{eq06}
    \end{algorithmic}
\end{algorithm}

With the decision set described in \eqref{eq:union_of_convex}, lines \ref{alg1l1} and \ref{alg1l2} of the algorithm can be implemented simultaneously using the result of Lemma \ref{lemma:union_of_convex}. According to Lemma \ref{lemma:union_of_convex}, this can be done by solving the convex optimization problem 
\begin{align}\label{eq07}
    \begin{split}
        \minimize \quad & \theta^\top z \\
        \st \quad & \Gamma z \leq \tau \\
        & \alpha_i f_{ij}(x_i/\alpha_i)\leq 0, \quad i\in[k], j\in[k_i]\\
        & \ones^\top \alpha = 1 \\
        & \alpha \geq 0 \\
        & z = x_1 + \cdots + x_k
    \end{split}
\end{align}
and finding the optimal $z^* = \sum_{i=1}^{k} \alpha_i z_i$, where $z_i=x_i/\alpha_i$ and $x_i, \alpha_i$ are solutions of \eqref{eq07}. Note that in this case, instead of expressing \(z^*\) in terms of at most \(d+1\) points, it is expressed in terms of \(k\) points. Based on how \(d\) and \(k\) compare, this can be a computational advantage or disadvantage. However, it does not affect the overall flow of Algorithm \ref{alg1} as all the steps can be implemented and the only difference is that \(d+1\) gets substituted by \(k\).

While we have addressed the implementation issue in line \ref{alg1l2} of Algorithm \ref{alg1} for a special case, we do not have a general computationally efficient method to implement it without further knowledge of the set \(\calD_t\) and the way it is being expressed.

Finally, line~\ref{alg1l3} of Algorithm~\ref{alg1} can be implemented using the Simplex method, and line~\ref{alg1l4} is constructed based on the output of line~\ref{alg1l3}, which concludes our full algorithmic pipeline to calculate the optimal feasible policy \(\pi^*_t\).

\subsection{Computationally-tractable algorithms with performance guarantees for linear bandits with linear constraints}\label{sec:l1_oplb}

In the literature, there are numerous formulations of linearly-constrained linear bandits \cite{amani2019linear, moradipari2021safe, pacchiano2021stochastic, moradipari2020stage,kazerouni2017conservative,khezeli2020safe}. Many associated algorithms \cite{kazerouni2017conservative, moradipari2020stage, moradipari2021safe, pacchiano2021stochastic} follow similar strategies. Specifically, they establish confidence regions for both reward and cost parameters. These algorithms strive to optimistically maximize the reward, while taking a pessimistic stance in controlling the cost. This means they account for the worst-case scenario that the cost parameter corresponds to the least favorable value within the confidence region.

In this section, we delve into the Optimistic-Pessimistic Linear Bandit (OPLB) Algorithm introduced by \cite{pmlr-v130-pacchiano21a}, which serves as our foundational algorithm. We elucidate its workings, identify computational barriers, and tackle these challenges by introducing computationally-tractable algorithms backed by performance guarantees. It is worth noting that, although our solutions are tailored to a specific formulation of the linearly constrained linear bandit problem, they can be readily extended to other formulations, given that they all encounter the same computational challenge. 

Consider a linear bandit with linear constraints as described in \ref{sec:prob_formulation}. For simplicity we assume \( m=1 \). Consequently, the constraint matrix \( \Gamma_* \in \mathbb{R}^{m \times d} \) simplifies to a row vector, which we denote by \( \mu_*^\top \in \mathbb{R}^d \). This implies that only one linear constraint, \( \mu_*^\top x \leq \tau \), is present. Extending this to the general case with \( m \) constraints is straightforward.

At each round \( t\in[T] \), given the past actions \( \{x_i\}_{i=1}^{t-1} \), observed rewards \( \{r_i\}_{i=1}^{t-1} \), and cost signals \( \{c_i\}_{i=1}^{t-1} \), we construct the Gram matrix 
\begin{equation}\label{gram_matrix}
    \Sigma_t = \lambda I + \sum_{i=1}^{t-1}x_i x_i^\top.
\end{equation}
Then we compute the \( \ell_2 \)-regularized least squares estimates of \( \theta_* \) and \( \mu_* \) using the regularization parameter \( \lambda \). These are given by 
\begin{equation}\label{eq:rls}
    \thetahat_t = \Sigma_t^{-1} \sum_{i=1}^{t-1}r_ix_i, \quad \quad \muhat_t = \Sigma_t^{-1} \sum_{i=1}^{t-1}c_ix_i.
\end{equation}

As suggested by OPLB, we construct the confidence sets 
\begin{align}\label{conf_sets}
\begin{split}
    \calC_{t,\ell_2}^\theta&=\left\{\theta\in\reals^d:\left\|\theta-\thetahat_t\right\|_{\Sigma_t,2}\leq \rho\beta_t\right\},\\
    \calC_{t,\ell_2}^\mu&=\left\{\mu\in\reals^d:\left\|\mu-\muhat_t\right\|_{\Sigma_t,2}\leq \beta_t\right\},
\end{split}
\end{align}
where \( \rho=1+\frac{2}{\tau-c_0} \), \( \beta_t=R\sqrt{d\log \frac{1+(t-1)L^2/\lambda}{\delta}}+\sqrt{\lambda}S \), and \(\|.\|_{\Sigma_t, 2}\) is defined in Sectin \ref{sec:prelim}.

According to the principal theorem presented in \cite{NIPS2011_e1d5be1c}, there is a probability of at least \(1-\delta\) that the unidentified parameters \(\theta_*\) and \(\mu_*\) are contained within the sets \(\calC_{t,\ell_2}^\theta\) and \(\calC_{t,\ell_2}^\mu\), respectively.

The final step of OPLB is to solve the problem 
\begin{align}\label{eq:oplb_org}
    \begin{split}
        \maximize_{\pi\in\Delta_{\calD_t}, \theta\in\reals^d}\quad &\Expect_{x\sim\pi}\left(\theta^\top x\right) \\
        \st \quad& \theta\in \calC_{t,\ell_2}^\theta\\
         & \pi\in\Pi_t,
    \end{split}
\end{align}
where 
\begin{equation}\label{eq:safe_policies}
    \Pi_t=\{\pi\in\Delta_{\calD_t}:\Expect_{x\sim\pi}\left(\mu^\top x\right) \leq \tau,\: \forall \mu\in\calC^\mu_{t,\ell_2}\}
\end{equation}
is the pessimistic set of safe policies. 

\begin{proposition}\label{prop:oplb_prob}
    The optimization problem \eqref{eq:oplb_org} is equivalent to 
    \begin{align}\label{eq:oplb_prob}
    \begin{split}
        \maximize_{z\in\reals^d}\quad &\rho\beta_t \sqrt{z^\top \Sigma_t z} + \thetahat_t^\top z \\
        \st \quad& \beta_t \sqrt{z^\top \Sigma_t z} + \muhat_t^\top z\leq \tau\\
         & z\in\Co(\calD_t).
    \end{split}
\end{align}
\end{proposition}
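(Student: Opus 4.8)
The plan is to mirror the marginalization argument used for Theorem~\ref{thm_opt_policy} and then eliminate the auxiliary variable $\theta$ and the semi-infinite cost constraint by two closed-form support-function computations over the confidence ellipsoids.

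First I would observe that, exactly as in the proof of Theorem~\ref{thm_opt_policy}, the decision variable $\pi$ enters \eqref{eq:oplb_org} only through its mean $z=\Expect_{x\sim\pi}(x)$: the objective equals $\theta^\top z$, and the membership $\pi\in\Pi_t$ from \eqref{eq:safe_policies} is precisely the requirement $\mu^\top z\leq\tau$ for every $\mu\in\calC^\mu_{t,\ell_2}$, which again depends on $\pi$ solely through $z$. Since the set of attainable means $\{\Expect_{x\sim\pi}(x):\pi\in\Delta_{\calD_t}\}$ is exactly $\Co(\calD_t)$ (every convex combination of points of $\calD_t$ is realized by some distribution, and conversely every mean lies in the convex hull), I can replace the optimization over $\pi$ by an optimization over $z\in\Co(\calD_t)$. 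This reduces \eqref{eq:oplb_org} to a joint maximization over $(z,\theta)$ subject to $\theta\in\calC^\theta_{t,\ell_2}$, $z\in\Co(\calD_t)$, and $\mu^\top z\leq\tau$ for all $\mu\in\calC^\mu_{t,\ell_2}$.

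Next I would decouple the two remaining steps. The parameter $\theta$ appears only in the objective and only through the constraint $\theta\in\calC^\theta_{t,\ell_2}$, so for fixed $z$ the inner problem $\max_{\theta\in\calC^\theta_{t,\ell_2}}\theta^\top z$ is the maximum of a linear functional over the ellipsoid centered at $\thetahat_t$ with radius $\rho\beta_t$. Evaluating this support function in closed form (by Cauchy--Schwarz, with equality attained at the boundary point aligned with $z$) replaces the objective by the optimistic expression appearing in \eqref{eq:oplb_prob}. Symmetrically, the semi-infinite safety requirement $\mu^\top z\leq\tau$ for all $\mu\in\calC^\mu_{t,\ell_2}$ is equivalent to bounding its worst case, $\max_{\mu\in\calC^\mu_{t,\ell_2}}\mu^\top z\leq\tau$; the same support-function computation over the ellipsoid centered at $\muhat_t$ with radius $\beta_t$ turns this single inequality into the pessimistic constraint of \eqref{eq:oplb_prob}. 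Collecting the optimistic objective and the pessimistic constraint over $z\in\Co(\calD_t)$ yields exactly \eqref{eq:oplb_prob}.

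The routine pieces are the two ellipsoid support-function evaluations, which are standard. The step requiring the most care is the marginalization: I must argue that passing from $\pi$ to $z$ is a genuine equivalence rather than a relaxation, \ie, that the attainable-means set equals $\Co(\calD_t)$ and that any optimal $z^\star$ can be lifted back to a feasible policy $\pi^\star\in\Pi_t$ (which is immediate once $z^\star\in\Co(\calD_t)$, using a Caratheodory representation as in Theorem~\ref{thm_opt_policy}), together with checking that recasting the robust constraint as its pointwise maximum is an exact reformulation and not merely a sufficient one.
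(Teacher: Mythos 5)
Your proposal is correct and follows essentially the same route as the paper's proof: marginalize the policy to its mean $z=\Expect_{x\sim\pi}(x)$ with the constraint $z\in\Co(\calD_t)$ exactly as in Theorem~\ref{thm_opt_policy}, then convert the optimistic objective and the robust safety constraint via ellipsoid support functions---the only difference being that the paper delegates that second step to Proposition~1 of \cite{pmlr-v130-pacchiano21a} while you carry out the (standard) computation explicitly. One caution: performed carefully, the support function of $\{\theta:\|\theta-\thetahat_t\|_{\Sigma_t,2}\le\rho\beta_t\}$ in direction $z$ is $\thetahat_t^\top z+\rho\beta_t\sqrt{z^\top\Sigma_t^{-1}z}$, so the square roots should involve $\Sigma_t^{-1}$ rather than $\Sigma_t$; your reasoning is sound, and the displayed formula in \eqref{eq:oplb_prob} that you are matching appears to carry a typo on this point.
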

\begin{proof}
    First, we define \( z = \mathbb{E}_{x \sim \pi}(x) \). Instead of tackling an optimization problem over a set of probability distributions, we aim to find the expected value. This step needs the condition \( z \in \Co(\mathcal{D}_t) \). This reasoning follows the same lines as the proof of Theorem \ref{thm_opt_policy}. The remainder of the proof, which explains the specific forms of the objective function and the constraint, directly stems from Proposition 1 in \cite{pmlr-v130-pacchiano21a}.
\end{proof}
Once equation \eqref{eq:oplb_prob} is solved and the optimal solution \(z^* = \sum_{i=1}^{d+1} \alpha_i z_i\) is identified as a convex combination of elements from \(\mathcal{D}_t\), the optimal feasible policy \( \pi^*_t \) is expressed by
    \begin{equation}\label{eq:opt_policy_oplb}
        \Prob_{x\sim\pi^*_t}(x=z) = 
        \begin{cases}
            \alpha_i & \text{if } z=z_i \\
            0 & \text{otherwise}.
        \end{cases}
    \end{equation}

The following theorem, a central result from \cite{pmlr-v130-pacchiano21a}, offers a regret bound on the algorithm's performance.
\begin{theorem}[Theorem 2 of \cite{pmlr-v130-pacchiano21a}]\label{thm:oplb_regret}
    Assuming the conditions presented in the problem formulation of Section \ref{sec:prob_formulation} are satisfied, the regret of OPLB, with a probability greater than \(1-2\delta\), is bounded by
    \begin{align}\label{eq:oplb_regret_bound}
    \begin{split}
        \mathcal{R}_\Pi\left(\theta, T\right) &\leq \frac{2L(\rho+1)\beta_T}{\sqrt{\lambda}}\sqrt{2T\log\left(1/\delta\right)} \\ 
        &+ (\rho+1)\beta_T\sqrt{2Td\log\left(1+\frac{TL^2}{\lambda}\right)}.
    \end{split}
    \end{align}
\end{theorem}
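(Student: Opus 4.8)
The plan is to run the standard optimistic--pessimistic regret argument: on a high-probability event the played policy is simultaneously truly safe and optimistic, its per-round regret is controlled by the width of the confidence ellipsoid evaluated at the played action, and these widths are then summed by an elliptical-potential argument. Throughout I would write \(z_t=\Expect_{x\sim\pi_t}(x)\) and \(z_t^*=\Expect_{x\sim\pi_t^*}(x)\), so that the per-round regret is \(r_t=\theta_*^\top z_t^*-\theta_*^\top z_t\) and \(\calR_\Pi(\theta_*,T)=\sum_{t=1}^T r_t\).

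First I would fix the good event. By the self-normalized concentration bound of \cite{NIPS2011_e1d5be1c}, each of the inclusions \(\theta_*\in\calC_{t,\ell_2}^\theta\) and \(\mu_*\in\calC_{t,\ell_2}^\mu\) holds simultaneously for all \(t\in[T]\) with probability at least \(1-\delta\); a union bound gives an event \(E\) of probability at least \(1-2\delta\) on which both hold, and I condition on \(E\) from here on. Safety is then immediate: since \(\mu_*\in\calC_{t,\ell_2}^\mu\), the defining inequality of the pessimistic safe set \eqref{eq:safe_policies} forces \(\Expect_{x\sim\pi_t}(\mu_*^\top x)\le\tau\), so every played policy is genuinely feasible.

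The delicate step is that the omniscient optimum \(\pi_t^*\) need not lie in the pessimistic set \(\Pi_t\), so optimism cannot be invoked against it directly. I would construct a feasible surrogate by shrinking \(z_t^*\) toward the known safe action, \(\tilde z_t=(1-\epsilon_t)z_t^*+\epsilon_t x_0\), using \(\mu_*^\top x_0=c_0=0\) from the universally-safe-action assumption of the problem formulation. Choosing the mixing weight \(\epsilon_t\) proportional to \(\beta_t\|\tilde z_t\|_{\Sigma_t^{-1},2}/(\tau-c_0)\) creates just enough constraint slack to absorb the pessimistic widening \(2\beta_t\|\tilde z_t\|_{\Sigma_t^{-1},2}\) of the cost, so that \(\tilde z_t\) satisfies the pessimistic constraint of \eqref{eq:oplb_prob}; this is exactly where the inflation factor \(\rho=1+2/(\tau-c_0)\) originates. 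The shrinkage loses at most \(\epsilon_t\) in reward because \(\theta_*^\top\tilde z_t\ge(1-\epsilon_t)\theta_*^\top z_t^*\). Since \(z_t\) maximizes the optimistic objective of Proposition \ref{prop:oplb_prob} over the pessimistically feasible set and \(\tilde z_t\) belongs to that set, the optimistic value at \(z_t\) dominates \(\theta_*^\top z_t^*\) up to the \(\epsilon_t\) term; subtracting the true reward \(\theta_*^\top z_t\) and using \(\theta_*\in\calC_{t,\ell_2}^\theta\) (radius \(\rho\beta_t\)) collapses everything to \(r_t\le(\rho+1)\beta_t\,\Expect_{x\sim\pi_t}\|x\|_{\Sigma_t^{-1},2}\) after tracking constants.

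Finally I would sum the widths. Writing \(\Expect_{x\sim\pi_t}\|x\|_{\Sigma_t^{-1},2}=\|x_t\|_{\Sigma_t^{-1},2}+D_t\), the term \(D_t\) is a martingale difference with respect to \(\calF_t\) bounded in magnitude by \(L/\sqrt{\lambda}\) (since \(\|x\|_{\Sigma_t^{-1},2}\le\|x\|/\sqrt{\lambda}\)); Azuma--Hoeffding then bounds \(\sum_t D_t\) in terms of \(\sqrt{2T\log(1/\delta)}\) and yields the first summand of \eqref{eq:oplb_regret_bound}. For the realized widths, Cauchy--Schwarz and the elliptical-potential (log-determinant) inequality give \(\sum_t\|x_t\|_{\Sigma_t^{-1},2}\le\sqrt{T\sum_t\|x_t\|_{\Sigma_t^{-1},2}^2}\le\sqrt{2Td\log(1+TL^2/\lambda)}\), and the monotonicity \(\beta_t\le\beta_T\) lets me factor out \(\beta_T\) to produce the second summand. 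The main obstacle is the surrogate-construction step of the previous paragraph: balancing the pessimistic feasibility of \(\tilde z_t\) against its reward loss so that the exact constant \(\rho\) emerges, and verifying that the relevant confidence width is the \(\Sigma_t^{-1}\)-norm \(\|x\|_{\Sigma_t^{-1},2}\). The remaining concentration, Azuma, and potential steps are routine.
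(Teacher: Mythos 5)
This statement is quoted verbatim as Theorem~2 of \cite{pmlr-v130-pacchiano21a} and the paper supplies no proof of its own, so there is nothing internal to compare against; your sketch is, however, a faithful reconstruction of the source's actual argument (good event via self-normalized concentration plus a union bound, safety from pessimism, the mixture-with-\(x_0\) surrogate that generates \(\rho=1+2/(\tau-c_0)\), per-round regret \((\rho+1)\beta_t\,\Expect_{x\sim\pi_t}\|x\|_{\Sigma_t^{-1},2}\), then Azuma--Hoeffding for the expectation-versus-realization gap and the elliptical potential lemma for the realized widths), and each step maps onto one of the two summands of \eqref{eq:oplb_regret_bound} as you describe. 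The only points I would tighten are the mild circularity in defining \(\epsilon_t\) through \(\tilde z_t\) (the mixing weight should be set from the width at \(z_t^*\) or at the played point) and the martingale increment bound, which is \(2L/\sqrt{\lambda}\) rather than \(L/\sqrt{\lambda}\) since \(D_t\) is a difference of two terms each at most \(L/\sqrt{\lambda}\) --- consistent with the factor \(2L\) in the first summand.
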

While the outlined approach offers a comprehensive pipeline to tackle the constrained bandit problem, a primary obstacle arises from the computational complexity of solving the main optimization problem \eqref{eq:oplb_org} or its equivalent \eqref{eq:oplb_prob}. As noted in \cite{Dani2008StochasticLO}, the unconstrained variant of this problem, with a decision set that is represented as a polytope defined by the intersection of halfspaces, is NP-hard. This implies that searching for a universally applicable computational technique, irrespective of the decision set's nature, may be futile. Furthermore, as elaborated in Section \ref{sec:opt_policy}, optimizing over probability distributions (or equivalently, with the constraint \(z\in\Co\left(\calD_t\right)\)) introduces its own set of challenges.

To navigate the first challenge, we propose a modified OPLB that, while computationally feasible, yields a more relaxed regret bound. This modification ensures a universally efficient algorithm. Later in Section \ref{sec:linear_relaxation}, we present an alternative technique for addressing the original problem \eqref{eq:oplb_prob}, which is suitable for specific cases but not universally. To tackle the second challenge, analogous to Section \ref{sec:opt_policy}, we utilize the technique introduced in Lemma \ref{lemma:union_of_convex}.

To make the OPLB more computationally efficient, we modify the confidence sets. Instead of using the confidence set \( C^\theta_{t,\ell_2} \) presented in \eqref{conf_sets}, we switch to a confidence set using the \( \ell_1 \) norm and an adjusted radius. Specifically, we define the confidence set as
\begin{equation}\label{eq:conf_set_l1}
    \calC_{t,\ell_1}^\theta = \left\{ \theta \in \reals^d : \left\| \theta-\thetahat_t \right\|_{\Sigma_t,1} \leq \rho\sqrt{d}\beta_t \right\},
\end{equation}
where \( \rho \) and \( \beta_t \) retain their previous definitions and \(\|.\|_{\Sigma,1}\) is detailed in Section \ref{sec:prelim}.
Note that, as will be shown in a subsequent lemma, an \( \ell_1 \) confidence set for \( \mu_* \) is unnecessary. Instead, we can continue using the \( \calC_{t,\ell_2}^\mu \) as previously defined.
\begin{lemma}\label{lemma:l1_conf_bound}
    For any \( t \in [T] \) and any \( \delta > 0 \), the following holds:
    \begin{align}\label{eq:l1_conf_bound}
        \Prob\left(\theta_*\in\calC_{t,\ell_1}^\theta\right) &\geq 1-\delta.
    \end{align}
\end{lemma}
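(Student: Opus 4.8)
The plan is to show that the $\ell_1$ confidence set \emph{contains} the $\ell_2$ confidence set, so that the high-probability guarantee already available for $\calC_{t,\ell_2}^\theta$ transfers immediately. First I would recall the elementary comparison between the $\ell_1$ and $\ell_2$ norms on $\reals^d$: for every $v \in \reals^d$, writing $\|v\|_1 = \langle \ones, |v| \rangle$ and applying Cauchy--Schwarz gives $\|v\|_1 \leq \sqrt{d}\,\|v\|_2$. Applying this with $v = \Sigma_t^{1/2}(\theta - \thetahat_t)$ and using the definition $\|x\|_{\Sigma_t,p} = \|\Sigma_t^{1/2} x\|_p$ from Section~\ref{sec:prelim} yields $\|\theta - \thetahat_t\|_{\Sigma_t,1} \leq \sqrt{d}\,\|\theta - \thetahat_t\|_{\Sigma_t,2}$ for all $\theta \in \reals^d$.

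Next I would use this pointwise norm bound to establish the set inclusion $\calC_{t,\ell_2}^\theta \subseteq \calC_{t,\ell_1}^\theta$. Indeed, if $\theta \in \calC_{t,\ell_2}^\theta$, then by definition $\|\theta - \thetahat_t\|_{\Sigma_t,2} \leq \rho\beta_t$, and combining this with the norm comparison gives $\|\theta - \thetahat_t\|_{\Sigma_t,1} \leq \sqrt{d}\cdot\rho\beta_t = \rho\sqrt{d}\beta_t$, which is exactly the defining inequality of $\calC_{t,\ell_1}^\theta$ in \eqref{eq:conf_set_l1}. Hence every $\theta$ lying in the $\ell_2$ set also lies in the $\ell_1$ set.

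Finally, I would invoke the guarantee stated just after \eqref{conf_sets}, which itself follows from the self-normalized concentration bound of \cite{NIPS2011_e1d5be1c}, namely $\Prob(\theta_* \in \calC_{t,\ell_2}^\theta) \geq 1-\delta$. Monotonicity of probability under the inclusion above then gives $\Prob(\theta_* \in \calC_{t,\ell_1}^\theta) \geq \Prob(\theta_* \in \calC_{t,\ell_2}^\theta) \geq 1-\delta$, which is the claim. There is no substantive obstacle here; the entire content is the $\ell_1$-versus-$\ell_2$ norm comparison together with monotonicity of measure under set inclusion. The only point requiring care is that the radius be inflated by exactly the factor $\sqrt{d}$ so that the inclusion holds and no probability is lost, which is precisely why the radius in \eqref{eq:conf_set_l1} is chosen as $\rho\sqrt{d}\beta_t$ rather than $\rho\beta_t$.
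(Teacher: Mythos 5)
Your proposal is correct and follows essentially the same route as the paper: establish the norm comparison $\|v\|_1 \leq \sqrt{d}\,\|v\|_2$, deduce the inclusion $\calC_{t,\ell_2}^\theta \subseteq \calC_{t,\ell_1}^\theta$, and transfer the $1-\delta$ guarantee from \cite{NIPS2011_e1d5be1c} by monotonicity of probability. No gaps.
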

\begin{proof}
    For any vector \( x \in \reals^d \), we have that \( \|x\|_1 \leq \sqrt{d} \|x\|_2 \). This yields 
    \[ \left\|\Sigma^{1/2}\left(\theta - \thetahat\right)\right\|_1 \leq \sqrt{d} \left\|\Sigma^{1/2}\left(\theta - \thetahat\right)\right\|_2. \]
    Given that the right-hand side is bounded by \( \sqrt{d} \rho \beta_t \) for any \( \theta \in \calC^\theta_{t, \ell_2} \), it follows that
    \( \calC^\theta_{t, \ell_2} \subseteq \calC^\theta_{t, \ell_1} \).
    By the main theorem of \cite{NIPS2011_e1d5be1c}, we know that \( \theta_* \in \calC^\theta_{t, \ell_2} \) with a probability of at least \( 1-\delta \), which concludes the proof.
\end{proof}
In the modified version of OPLB that incorporates the $\ell_1$ confidence region, we address a new optimization problem given by
\begin{align}\label{eq:l1_OPLB_org}
    \begin{split}
        \maximize_{z\in\reals^d, \theta\in\reals^d}\quad &\theta^\top z \\
        \st \quad& \theta\in \calC_{t, \ell_1}^\theta\\
         & z\in S_t\\
         & z\in\Co\left(\calD_t\right),
    \end{split}
\end{align}
where \(S_t=\{z\in\reals^d:\mu^\top z \leq \tau,\: \forall \mu\in\calC^\mu_{t,\ell_2}\}.\) Once this problem is solved and the optimal solution \(z^* = \sum_{i=1}^{d+1} \alpha_i z_i\) is identified as a convex combination of elements from \(\mathcal{D}_t\), the optimal feasible policy \( \pi^*_t \) is given by \eqref{eq:opt_policy_oplb}.
\begin{proposition}\label{prop:l1_efficiency}
    The optimization problem \eqref{eq:l1_OPLB_org} can be decomposed and solved by addressing $2d$ individual convex optimization problems.
\end{proposition}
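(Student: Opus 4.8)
The plan is to eliminate the inner variable $\theta$ in closed form and then exploit the piecewise-linear structure of the resulting objective. Since \eqref{eq:l1_OPLB_org} maximizes the bilinear form $\theta^\top z$ jointly, I would first write it as the nested maximization $\max_{z\in S_t\cap\Co(\calD_t)}\ \max_{\theta\in\calC^\theta_{t,\ell_1}}\theta^\top z$ and evaluate the inner problem via the support function of the weighted $\ell_1$-ball. Substituting $u=\Sigma_t^{1/2}(\theta-\thetahat_t)$ turns $\calC^\theta_{t,\ell_1}$ into an ordinary $\ell_1$-ball of radius $\rho\sqrt{d}\beta_t$, and because the dual norm of $\ell_1$ is $\ell_\infty$, the inner maximum equals $\thetahat_t^\top z+\rho\sqrt{d}\beta_t\,\|\Sigma_t^{-1/2}z\|_\infty$. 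The outer problem is therefore to maximize this function of $z$ over $S_t\cap\Co(\calD_t)$.

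The second step is the decomposition. The term $\|\Sigma_t^{-1/2}z\|_\infty$ is the pointwise maximum of the $2d$ linear functions $s\,e_i^\top\Sigma_t^{-1/2}z$ with $i\in[d]$ and $s\in\{+1,-1\}$. Since the coefficient $\rho\sqrt{d}\beta_t$ is nonnegative and the linear term $\thetahat_t^\top z$ does not depend on $(i,s)$, for every fixed $z$ the objective equals $\max_{i,s}\big(\thetahat_t^\top z+\rho\sqrt{d}\beta_t\, s\,e_i^\top\Sigma_t^{-1/2}z\big)$. Maxima commute, so the joint maximum over $z$ and $(i,s)$ can be reordered, yielding $\max_{i,s}\ \max_{z}\big(\thetahat_t^\top z+\rho\sqrt{d}\beta_t\, s\,e_i^\top\Sigma_t^{-1/2}z\big)$ subject to $z\in S_t\cap\Co(\calD_t)$. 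This is exactly a family of $2d$ problems, one per $(i,s)$, whose overall maximizer is the best among them.

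Finally I would verify that each of the $2d$ subproblems is convex. Each has a linear objective in $z$. The feasible set is the intersection of $S_t$ and $\Co(\calD_t)$: the set $\Co(\calD_t)$ is convex by definition (and tractably representable through the lifting of Lemma \ref{lemma:union_of_convex}), while $S_t=\{z:\mu^\top z\le\tau\ \forall\mu\in\calC^\mu_{t,\ell_2}\}$ is an intersection of halfspaces and hence convex; concretely it reduces to the single second-order-cone constraint $\muhat_t^\top z+\beta_t\sqrt{z^\top\Sigma_t^{-1}z}\le\tau$ obtained by maximizing $\mu^\top z$ over the $\ell_2$-ball $\calC^\mu_{t,\ell_2}$. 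Thus each subproblem is a convex program, and solving all $2d$ of them and returning the best solution recovers the optimum of \eqref{eq:l1_OPLB_org}, which is the claim.

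The main obstacle is the step that turns an apparently intractable problem into a tractable one: after eliminating $\theta$, the objective is convex in $z$, so \emph{maximizing} it over a convex set is in general NP-hard. The crucial observation making the proposition work is that this convex objective is piecewise linear with only $2d$ pieces, so the maximization splits into $2d$ linear maximizations over a convex set. I expect the delicate points to be the correct handling of the $\Sigma_t$-weighting when computing the support function and the justification of the max-swap, both of which hinge on the nonnegativity of $\rho\sqrt{d}\beta_t$.
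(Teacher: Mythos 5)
Your proof is correct and is essentially the paper's argument seen from the dual side: the paper maximizes the convex value function $f(\theta)=\max_{z\in S_t\cap\Co(\calD_t)}\theta^\top z$ over the polytope $\calC^\theta_{t,\ell_1}$ and invokes the fact that a convex function attains its maximum at one of the polytope's $2d$ vertices, while you evaluate the inner maximum over $\theta$ as a support function and split the resulting $\ell_\infty$ term into its $2d$ linear pieces --- the subproblems are identical, since the pieces correspond exactly to the vertices $\thetahat_t\pm\rho\sqrt{d}\beta_t\,\Sigma_t^{-1/2}e_i$. Both routes are valid; yours is merely more explicit about what the $2d$ convex programs are.
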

\begin{proof}
    We can express \eqref{eq:l1_OPLB_org} in the following format:
    \begin{align}\label{eq:l1_OPLB_theta_prob}
    \begin{split}
        \maximize_{\theta\in\reals^d}\quad &f(\theta) \\
        \st \quad& \theta\in \calC_{t, \ell_1}^\theta,\\
    \end{split}
    \end{align}
where the function $f$ is defined as:
\begin{align}\label{eq:l1_OPLB_z_prob}
    \begin{split}
        f(\theta)=\quad\max_{z\in\reals^d}\quad &\theta^\top z \\
        \mathrm{s.t.} \quad&  z\in S_t\\
        & z\in\Co\left(\calD_t\right).
    \end{split}
\end{align}
Given that $f$ is convex and the region $\calC_{t,\ell_1}^\theta$ forms a polytope in $\reals^d$, our task in \eqref{eq:l1_OPLB_theta_prob} is to maximize this convex function over the polytope. Recognizing that solutions to such problems occur at the vertices of the polytope, we realize that to solve \eqref{eq:l1_OPLB_theta_prob}, it suffices to evaluate $f$ at the $2d$ vertices. Each evaluation corresponds to solving a convex optimization problem as shown in \eqref{eq:l1_OPLB_z_prob}, which completes the proof.
\end{proof}
Proposition \ref{prop:l1_efficiency} demonstrates that the modified OPLB can be efficiently solved. The subsequent step is to ascertain a guarantee for the regret bound. The theorem below provides this guarantee.
\begin{theorem}[Modified OPLB regret bound]\label{thm:l1_oplb_regret}
    Given that the conditions outlined in Section \ref{sec:prob_formulation} are met, the regret of the modified OPLB employing the $\ell_1$ confidence region for the reward parameter $\theta_*$, with a probability exceeding \(1-2\delta\), can be upper-bounded as
    \begin{align}\label{eq:l1_oplb_regret_bound}
    \begin{split}
        \mathcal{R}_\Pi\left(\theta, T\right) &\leq \frac{2L(\rho+1)\beta_T}{\sqrt{\lambda}}\sqrt{2Td\log\left(1/\delta\right)} \\ 
        &+ (\rho+1)\beta_Td\sqrt{2T\log\left(1+\frac{TL^2}{\lambda}\right)}.
    \end{split}
    \end{align}
\end{theorem}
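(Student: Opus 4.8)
The plan is to reduce the analysis of the modified OPLB to that of the original OPLB (Theorem~\ref{thm:oplb_regret}) by showing that replacing the $\ell_2$ reward confidence set with $\calC_{t,\ell_1}^\theta$ amounts, for the purpose of the regret proof, to running the original algorithm with an inflated reward confidence radius. The crucial observation is the containment
\[
\calC_{t,\ell_1}^\theta \subseteq \left\{\theta\in\reals^d : \left\|\theta-\thetahat_t\right\|_{\Sigma_t,2}\leq \sqrt{d}\,\rho\beta_t\right\},
\]
which follows immediately from the elementary inequality $\|v\|_2\leq\|v\|_1$ applied to $v=\Sigma_t^{1/2}(\theta-\thetahat_t)$. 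Thus the $\ell_1$ set is sandwiched: by Lemma~\ref{lemma:l1_conf_bound} it still contains $\theta_*$ with probability at least $1-\delta$, while it is contained in the $\ell_2$ ball whose radius is exactly a factor $\sqrt{d}$ larger than the one used in the original OPLB.

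First I would set up the good event. Combining Lemma~\ref{lemma:l1_conf_bound} (giving $\theta_*\in\calC_{t,\ell_1}^\theta$ with probability at least $1-\delta$) with the concentration result of \cite{NIPS2011_e1d5be1c} for the cost estimate (giving $\mu_*\in\calC^\mu_{t,\ell_2}$ with probability at least $1-\delta$), a union bound yields that both parameters lie in their respective confidence sets simultaneously with probability at least $1-2\delta$. On this event the safety analysis of the original OPLB applies verbatim, because the pessimistic safe set $S_t$ appearing in \eqref{eq:l1_OPLB_org} is built from the \emph{unchanged} cost confidence set $\calC^\mu_{t,\ell_2}$; in particular the chosen $z_t$ remains truly feasible, i.e. $\mu_*^\top z_t\leq\tau$, and the role of $\rho$ in relating the pessimistic set to the true optimum is unaffected.

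Next I would re-run the regret decomposition of Theorem 2 of \cite{pmlr-v130-pacchiano21a}, the only change being the reward confidence radius. Enlarging the set that contains $\theta_*$ preserves optimism, so the key inequality $\theta_t^\top z_t \geq \theta_*^\top z_t^\star$ still holds, and the per-round reward-estimation error is controlled by the diameter of the reward confidence set in the direction of $z_t$. By the containment above, this diameter is bounded using the inflated radius $\sqrt{d}\,\rho\beta_t$ in place of $\rho\beta_t$, while the cost-pessimism contribution still scales with $\beta_t$. The combined effective per-round width therefore becomes $(\sqrt{d}\,\rho+1)\beta_t \leq \sqrt{d}(\rho+1)\beta_t$, where the last bound (valid since $\sqrt{d}\geq 1$) lets me factor out a clean $\sqrt{d}$. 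Propagating this through the two ingredients of the original bound---the Azuma--Hoeffding term, in which $\|z_i\|_{\Sigma_i^{-1},2}$ is bounded by $L/\sqrt{\lambda}$, and the elliptical-potential term $\sqrt{2Td\log(1+TL^2/\lambda)}$---multiplies each by $\sqrt{d}$, turning $\sqrt{2T\log(1/\delta)}$ into $\sqrt{2Td\log(1/\delta)}$ in the first term and $\sqrt{2Td\log(1+TL^2/\lambda)}$ into $d\sqrt{2T\log(1+TL^2/\lambda)}$ in the second, which is exactly \eqref{eq:l1_oplb_regret_bound}.

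I expect the main obstacle to be bookkeeping rather than conceptual: one must verify that the per-round regret in the original analysis depends \emph{linearly} on the reward confidence radius, so that the substitution $\rho\beta_t\mapsto\sqrt{d}\,\rho\beta_t$ is legitimate and the $\sqrt{d}$ factors out, and one must be careful that only the reward term (not the cost term) is inflated, using the conservative bound $(\sqrt{d}\,\rho+1)\leq\sqrt{d}(\rho+1)$ to absorb both into a single factor. No new concentration or potential argument is required; the entire proof rides on the norm inequality $\|v\|_2\leq\|v\|_1$ already exploited in Lemma~\ref{lemma:l1_conf_bound}.
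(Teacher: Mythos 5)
Your proposal is correct and follows essentially the same route as the paper: both reduce the analysis to Theorem~\ref{thm:oplb_regret} by observing that the $\ell_1$ confidence set still contains $\theta_*$ with probability $1-\delta$ while the reward confidence radius is effectively inflated by $\sqrt{d}$, so the original bound applies with $\beta_T$ replaced by $\sqrt{d}\beta_T$. If anything, your write-up is more careful than the paper's, since you make explicit the containment of $\calC_{t,\ell_1}^\theta$ in the $\sqrt{d}$-inflated $\ell_2$ ball (via $\|v\|_2\leq\|v\|_1$) and the bookkeeping step $(\sqrt{d}\rho+1)\leq\sqrt{d}(\rho+1)$ that the paper leaves implicit.
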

\begin{proof}
    By examining the proof of Theorem \ref{thm:oplb_regret}, it becomes apparent that the regret bound depends on the confidence region radius of the reward parameter $\theta_*$, namely $\rho\beta_T$, without specifically relying on the value of $\rho\beta_T$. Further inspection reveals that the confidence region radius of the cost parameter $\mu_*$ has no bearing on the bound. In the modified OPLB approach, the initial radius is scaled by a factor of $\sqrt{d}$, while the latter remains unchanged. Hence, in the expression \eqref{eq:oplb_regret_bound}, substituting $\beta_t$ with $\sqrt{d}\beta_t$ results in the updated bound presented in \eqref{eq:l1_oplb_regret_bound}, which completes the proof.
\end{proof}
With Theorem \ref{thm:l1_oplb_regret}, we now possess a comprehensive framework for tackling the constrained bandit problem using algorithms that are computationally efficient. It's important to highlight that a key step in this process is the evaluation of the function \(f\) as defined in \eqref{eq:l1_OPLB_z_prob}. Although this is a convex optimization problem, one cannot overlook that its two constraints, in their most general form, may introduce complications unless they are further simplified.

The primary constraint, \(z\in S_t\), can be replaced by the more direct constraint \(\beta_t \sqrt{z^\top \Sigma_t z} + \muhat^\top z\leq \tau\), following the guidelines of Proposition \ref{prop:oplb_prob}. The latter constraint, \(z\in\Co\left(\calD_t\right)\), while complicated in general, is navigated for the class of decision sets studied in this work using the technique introduced in Lemma \ref{lemma:union_of_convex}. Consequently, the task of evaluating the function \(f\) from \eqref{eq:l1_OPLB_z_prob} simplifies to solving
\begin{equation}\label{eq:l1_OPLB_z_prob_final}
    \begin{split}
        f(\theta) = \max_{z\in\reals^d} &\quad  \theta^\top z \\
        \mathrm{s.t.} \quad&  \beta_t \sqrt{z^\top \Sigma_t z} + \muhat_t^\top z \leq \tau\\
        & \alpha_i f_{ij}(x_i/\alpha_i) \leq 0, \quad i\in[k], j\in[k_i]\\
        & \ones^\top \alpha = 1, \quad \alpha \geq 0, \\
        & z = x_1 + \cdots + x_k.
    \end{split}
\end{equation}
The procedure is concisely summarized in Algorithm \ref{alg:l1_oplb}.

\begin{algorithm}[H]
    \caption{Modified OPLB}\label{alg:l1_oplb}
    \begin{algorithmic}[1]
        \Statex \textbf{Input:} \( T\in\N \), \( \delta\in\reals_+ \), \( \gamma\in\reals_+ \), \( \tau\in\reals \)
        \For{\( t = 1 \) \textbf{to} \( T \)}
            \State Observe \( r_t, c_t \) and compute \( \thetahat_t \) and \( \muhat_t \) using \eqref{eq:rls}
            \State \(z^*\gets\) Solve \eqref{eq:l1_OPLB_org} using Proposition \ref{prop:l1_efficiency} and \eqref{eq:l1_OPLB_z_prob_final}
            \State Construct \( \pi_t \) using \(z^*\) according to \eqref{eq:opt_policy_oplb} 
            \State Play the action \( x_t \sim \pi_t \)
        \EndFor
    \end{algorithmic}
\end{algorithm}

\subsection{The upper bound maximization method}\label{sec:linear_relaxation}
Recall \eqref{eq:oplb_prob} which presents the original problem with \(\ell_2\) confidence sets that we initially sought to solve. Since solving this problem is challenging, our first approach was to present an \(\ell_1\) relaxation to this problem, as discussed in Section \ref{sec:l1_oplb}. In this section we introduce a problem-dependent method that has the potential to exactly solve \eqref{eq:oplb_prob}. The following theorem provides the tools that we need for this method.
\begin{theorem}\label{thm:upper_bound_relaxation}
Let \( g_1, g_2 : \reals^d \to \reals \) be arbitrary functions and let \( C \subseteq \reals^d \) be an arbitrary set.
    Consider the optimization problems 
    \begin{align}\label{eq:upper_bound_relaxation_prob1}
        \begin{split}
            \maximize_{z \in \reals^d} \quad & g_1(z) \\
            \st \quad & g_1(z) \leq g_2(z) \\
            & z \in C
        \end{split}
    \end{align}
    and
    \begin{align}\label{eq:upper_bound_relaxation_prob2}
        \begin{split}
            \maximize_{z \in \reals^d} \quad & g_2(z) \\
            \st \quad & g_1(z) \leq g_2(z) \\
            & z \in C.
        \end{split}
    \end{align}
    If \( z^* \) is an optimal solution for \eqref{eq:upper_bound_relaxation_prob2} and \( g_1(z^*) = g_2(z^*) \), then \( z^* \) is also an optimal solution for \eqref{eq:upper_bound_relaxation_prob1}.
\end{theorem}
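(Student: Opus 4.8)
The plan is to exploit the fact that the two problems \eqref{eq:upper_bound_relaxation_prob1} and \eqref{eq:upper_bound_relaxation_prob2} share exactly the same feasible set, namely
\[
F = \{z \in C : g_1(z) \le g_2(z)\},
\]
and differ only in their objective. Since $z^*$ is optimal, hence feasible, for \eqref{eq:upper_bound_relaxation_prob2}, it automatically lies in $F$ and is therefore feasible for \eqref{eq:upper_bound_relaxation_prob1} as well. Thus no separate feasibility check is needed, and it remains only to verify that $z^*$ attains the maximum value of $g_1$ over $F$.

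First I would take an arbitrary feasible point $z \in F$ and bound $g_1(z)$ from above through a short chain of inequalities. The feasibility constraint gives $g_1(z) \le g_2(z)$; the optimality of $z^*$ for \eqref{eq:upper_bound_relaxation_prob2} gives $g_2(z) \le g_2(z^*)$; and the hypothesis gives $g_2(z^*) = g_1(z^*)$. Concatenating these yields $g_1(z) \le g_1(z^*)$. Since $z \in F$ was arbitrary and $z^*$ is itself feasible, this shows $z^*$ maximizes $g_1$ over the common feasible set, which is precisely the assertion that $z^*$ solves \eqref{eq:upper_bound_relaxation_prob1}.

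The argument uses no convexity, continuity, or structural assumptions on $g_1$, $g_2$, or $C$, so essentially all the work is carried by the hypothesis $g_1(z^*) = g_2(z^*)$: it is this equality that converts the bound $g_2(z) \le g_2(z^*)$ on the second objective into a bound on the first. I do not expect a genuine obstacle, as the result is a clean two-line deduction once the shared feasible set is identified; the only point deserving care is the direction of the constraint inequality, since had it read $g_1(z) \ge g_2(z)$ the chain would break, so I would confirm that the sign of the constraint is used consistently throughout.
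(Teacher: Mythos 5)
Your proposal is correct and uses essentially the same argument as the paper: the identical chain $g_1(z) \le g_2(z) \le g_2(z^*) = g_1(z^*)$, which the paper merely packages as a proof by contradiction rather than a direct verification. If anything, your direct version is slightly cleaner, since it does not presuppose that problem \eqref{eq:upper_bound_relaxation_prob1} attains its maximum at some $\tilde z$.
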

\begin{proof}
    Suppose \( \tilde{z} \) is an optimal solution for \eqref{eq:upper_bound_relaxation_prob1} and \( z^* \) is not. Then, \( g_1(\tilde{z}) > g_1(z^*) = g_2(z^*) \geq g_2(\tilde{z}) \). The first inequality stems from the optimality of \( \tilde{z} \) and the non-optimality of \( z^* \) for \eqref{eq:upper_bound_relaxation_prob1}, the equality follows directly from the assumption of the theorem, and the last inequality arises because \( z^* \) maximizes \( g_2 \). This leads to \( g_1(\tilde{z}) > g_2(\tilde{z}) \), a violation of the constraint \( g_1(z) \leq g_2(z) \), thus a contradiction. This concludes that \( z^* \) is an optimal solution for \eqref{eq:upper_bound_relaxation_prob1}.
\end{proof}
This theorem allows us to solve \eqref{eq:upper_bound_relaxation_prob2} instead of \eqref{eq:upper_bound_relaxation_prob1}. If the condition \( g_1(z^*) = g_2(z^*) \) holds, then we have an optimal solution for \eqref{eq:upper_bound_relaxation_prob1} as well. This may be quite useful if \eqref{eq:upper_bound_relaxation_prob2} is more tractable than \eqref{eq:upper_bound_relaxation_prob1}.

We now apply the result of Theorem~\ref{thm:upper_bound_relaxation} to \eqref{eq:oplb_prob}. 
For clarity, we restate this problem as follows:
\begin{align}\label{eq:oplb_prob2}
    \begin{split}
        \maximize_{z \in \reals^d} \quad & \rho \beta_t \sqrt{z^\top \Sigma_t z} + \hat{\theta}_t^\top z \\
        \st \quad & \beta_t \sqrt{z^\top \Sigma_t z} + \hat{\mu}_t^\top z \leq \tau \\
         & z \in \Co(\mathcal{D}_t).
    \end{split}
\end{align}
Setting \(g_1(z) = \rho \beta_t \sqrt{z^\top \Sigma_t z} + \hat{\theta}_t^\top z\), \(g_2(z) = \rho \tau + (\hat{\theta}_t - \rho \hat{\mu}_t)^\top z\), and \(C = \Co(\mathcal{D}_t)\), \eqref{eq:oplb_prob2} becomes a particular instance of \eqref{eq:upper_bound_relaxation_prob1}. Consequently, the counterpart of \eqref{eq:upper_bound_relaxation_prob2} in our setting is
\begin{align}\label{eq:oplb_prob_linear}
    \begin{split}
        \maximize_{z \in \reals^d} \quad & \rho \tau + \left(\hat{\theta}_t - \rho \hat{\mu}_t\right)^\top z \\
        \st \quad & \beta_t \sqrt{z^\top \Sigma_t z} + \hat{\mu}_t^\top z \leq \tau \\
         & z \in \Co(\mathcal{D}_t),
    \end{split}
\end{align}
which is a convex optimization problem. This provides a potentially more efficient approach to solve the original OPLB problem with \(\ell_2\) confidence sets and yield exact solutions. It involves solving \eqref{eq:oplb_prob_linear}, a convex optimization problem amenable to efficient computation. Upon solving this problem, one must check whether the first constraint is active. If it is, then the obtained solution also solves \eqref{eq:oplb_prob2}. If not, the process shifts back to addressing the \(\ell_1\) version of the problem, as outlined in \eqref{eq:l1_OPLB_org} or \eqref{eq:l1_OPLB_z_prob_final}. To handle the second constraint in \eqref{eq:oplb_prob_linear}, we utilize the technique proposed in Lemma~\ref{lemma:union_of_convex}. Algorithm~\ref{alg:linear} summarizes the entire methodology. We refer to this technique as the \textit{Upper Bound Maximization (UBM)} method, as it entails maximizing an upper bound on the objective function rather than the objective function itself.

\begin{algorithm}[H]
    \caption{Enhanced OPLB with UBM}\label{alg:linear}
    \begin{algorithmic}[1]
        \Statex \textbf{Input:} \( T \in \mathbb{N} \), \( \delta \in \mathbb{R}_+ \), \( \gamma \in \mathbb{R}_+ \), \( \tau \in \mathbb{R} \)
        \For{\( t = 1 \) \textbf{to} \( T \)}
            \State Observe \( r_t, c_t \), and compute \( \hat{\theta}_t \), \( \hat{\mu}_t \) using \eqref{eq:rls}
            \State \(z^* \gets\) Solve \eqref{eq:oplb_prob_linear}
            \If{\(\beta_t \sqrt{{z^*}^\top \Sigma_t {z^*}} + \hat{\mu}_t^\top {z^*} < \tau\)}
                \State \(z^* \gets\) Solve \eqref{eq:l1_OPLB_org} using Proposition~\ref{prop:l1_efficiency} and \eqref{eq:l1_OPLB_z_prob_final}
            \EndIf
            \State Construct \( \pi_t \) using \(z^*\) according to \eqref{eq:opt_policy_oplb}
            \State Play the action \( x_t \sim \pi_t \)
        \EndFor
    \end{algorithmic}
\end{algorithm}

Each iteration of Algorithm~\ref{alg:linear} involves solving either the \(\ell_2\) or the \(\ell_1\) confidence set problem. Thus, the ultimate regret bound will be no worse than that provided by Theorem~\ref{thm:l1_oplb_regret} but may approach the bound of Theorem~\ref{thm:oplb_regret}, depending on the frequency at which the first constraint becomes active in \eqref{eq:oplb_prob_linear}.

\textbf{Example.} Figure~\ref{fig:lin_relax_example} illustrates a one-dimensional example of our setup. The decision set \(\calD_t\) is defined such that \(\Co\left(\calD_t\right)=\{z : |z| \leq 3\}\). Two distinct upper bound functions, \(g^{(1)}_{2}(z)\) and \(g^{(2)}_{2}(z)\), are introduced, each corresponding to a different value of \(\muhat_t\). The set \(S\) represents the points where the safety constraint \(g_1(z) \leq g_2(z)\), as described in \eqref{eq:oplb_prob2} and \eqref{eq:oplb_prob_linear}, is satisfied.

The implications of Theorem~\ref{thm:upper_bound_relaxation} are observable in Figure~\ref{fig:lin_relax_example}, where the conditions under which UBM is effective become apparent. Specifically, when the upper bound is described by \(g^{(1)}_{2}(z)\), maximizing this function also optimizes the original objective \(g_{1}(z)\), with the constraint \(g_1(z) \leq g_2(z)\) becoming active at the optimum. Conversely, when the upper bound is \(g^{(2)}_{2}(z)\), UBM does not lead to an optimal solution, as maximizing \(g^{(2)}_{2}(z)\) does not make the constraint active, rendering the approach ineffective in this case.

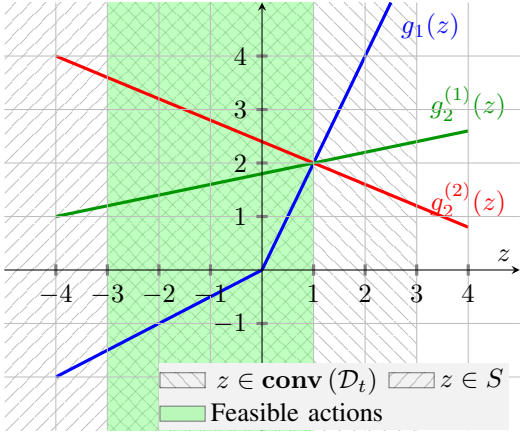
\begin{figure}
    \centering
    \begin{tikzpicture}[scale=1]

    \begin{axis}[
      axis lines=middle,
      grid=major,
      xmin=-5,
      xmax=5,
      ymin=-3,
      ymax=5,
      xlabel=$z$,
      xtick={-4,-3,...,4},
      ytick={-2,-1,...,4},
      tick style={very thick},
      legend style={
      at={(rel axis cs:.3,0)},
      legend columns=2,
      axis on top,
      anchor=south west,draw=none,inner sep=0pt,fill=gray!10}
    ]
    \fill[pattern={Lines[angle=-45,distance=4pt]}, opacity=.5] (axis cs:-3,-3) -- (axis cs:3,-3) -- (axis cs:3,10) -- (axis cs:-3,10) -- cycle;
    \fill[pattern={Lines[angle=45,distance=4pt]}, opacity=.5] (axis cs:-5,-3) -- (axis cs:1,-3) -- (axis cs:1,10) -- (axis cs:-5,10) -- cycle;
    \fill[green!50, opacity=.5] (axis cs:-3,-3) -- (axis cs:1,-3) -- (axis cs:1,10) -- (axis cs:-3,10) -- cycle;

    \addlegendimage{area legend,pattern={Lines[angle=-45,distance=4pt]},pattern color=black,opacity=.5}
    \addlegendentry{$z\in\Co\left(\calD_t\right)$}
    \addlegendimage{area legend,pattern={Lines[angle=45,distance=4pt]},pattern color=black,opacity=.5}
    \addlegendentry{$z\in S$}
    \addlegendimage{area legend,fill=green!50, opacity=.5}
    \addlegendentry{Feasible actions}
    
    \addplot[blue,very thick,samples=100,domain=-4:0] {.5*x};
    \addplot[blue,very thick,samples=100,domain=0:2.5] {2*x}node[below right] {$g_{1}(z)$};
    \addplot[red,very thick,samples=100,domain=-4:4] {-.4*x+2.4} node[above] {$g^{(2)}_{2}(z)$};
    \addplot[darkgreen,very thick,samples=100,domain=-4:4] {.2*x+1.8} node[above] {$g^{(1)}_{2}(z)$};

\end{axis}


\end{tikzpicture}

    \caption{Example of the upper bound maximization test}
    \label{fig:lin_relax_example}
\end{figure}

\section{Experiments}\label{sec:experiments}
In this section, we present empirical evaluations of the proposed algorithms through two distinct experiments.
\subsection{Enhanced OPLB policy evaluation with non-convex decision sets}
\begin{figure}
    \centering
    \includegraphics[width=.4\textwidth]{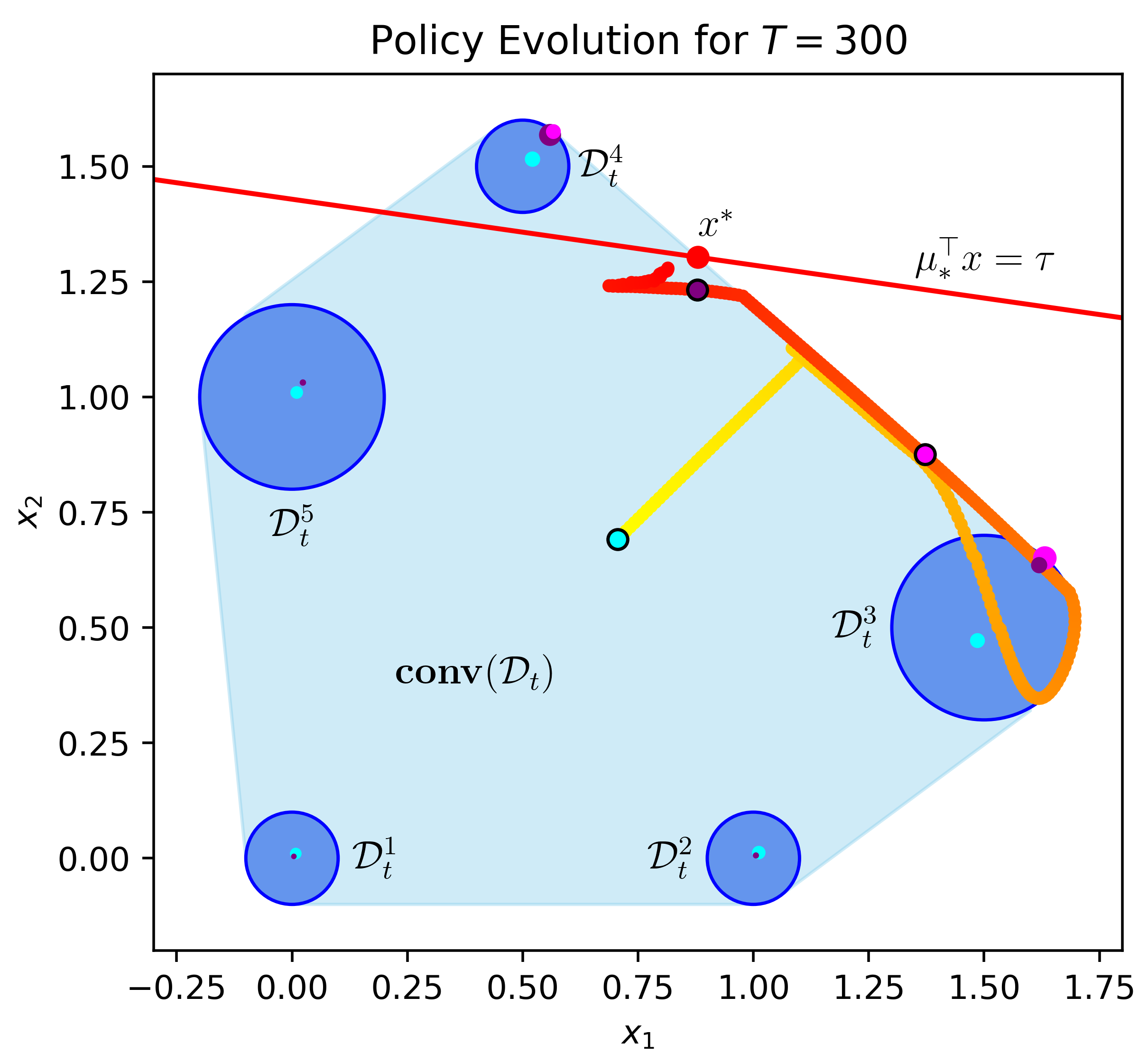}
    \caption{Mean policy trajectory with \(\calD_t\) as a union of convex sets}
    \label{fig:union_of_circles}
\end{figure}

The first experiment considers a two-dimensional scenario with a non-convex decision set represented by a union of five disks in \( \mathbb{R}^2 \), all subject to a single linear constraint. Figure~\ref{fig:union_of_circles} illustrates the policies chosen by the algorithm at each time step over a total of \( T=300 \) rounds. The trajectory depicting the mean value of the policy is shown in Figure~\ref{fig:union_of_circles}, which transitions from yellow to red as time progresses. Notably, at three specific time steps—\( t=0 \), \( t=40 \), and \( t=100 \)— the mean policy values are highlighted in cyan, magenta, and purple respectively, each delineated with a black border. Corresponding to each of these mean values, five points are plotted, representing the five potential actions, one of which is to be randomly selected according to a specific probability for the policy to be effective. The radius of each point is proportional to its probability weight in the policy's construction, with all weights summing up to one. Furthermore, the constraint boundary, defined by \( x^\top\mu = \tau \), is represented as a line within the figure, and the mean value of the optimal policy is denoted as \(x^*\).

Observations from the figure reveal that initially, the trajectory of the points moves along the boundary of the convex hull of the decision set and away from the optimal policy. However, as time progresses, the trajectory redirects towards the optimal policy and ultimately converges to the optimal solution. Furthermore, the mean value of the policy always remains within the safe region, indicating that the pessimism in action selection has been effective, ensuring that the algorithm does not violate the safety constraint at any point.

\subsection{Cumulative regret comparison}

\begin{figure*}
\centering
\includegraphics[width=.33\textwidth]{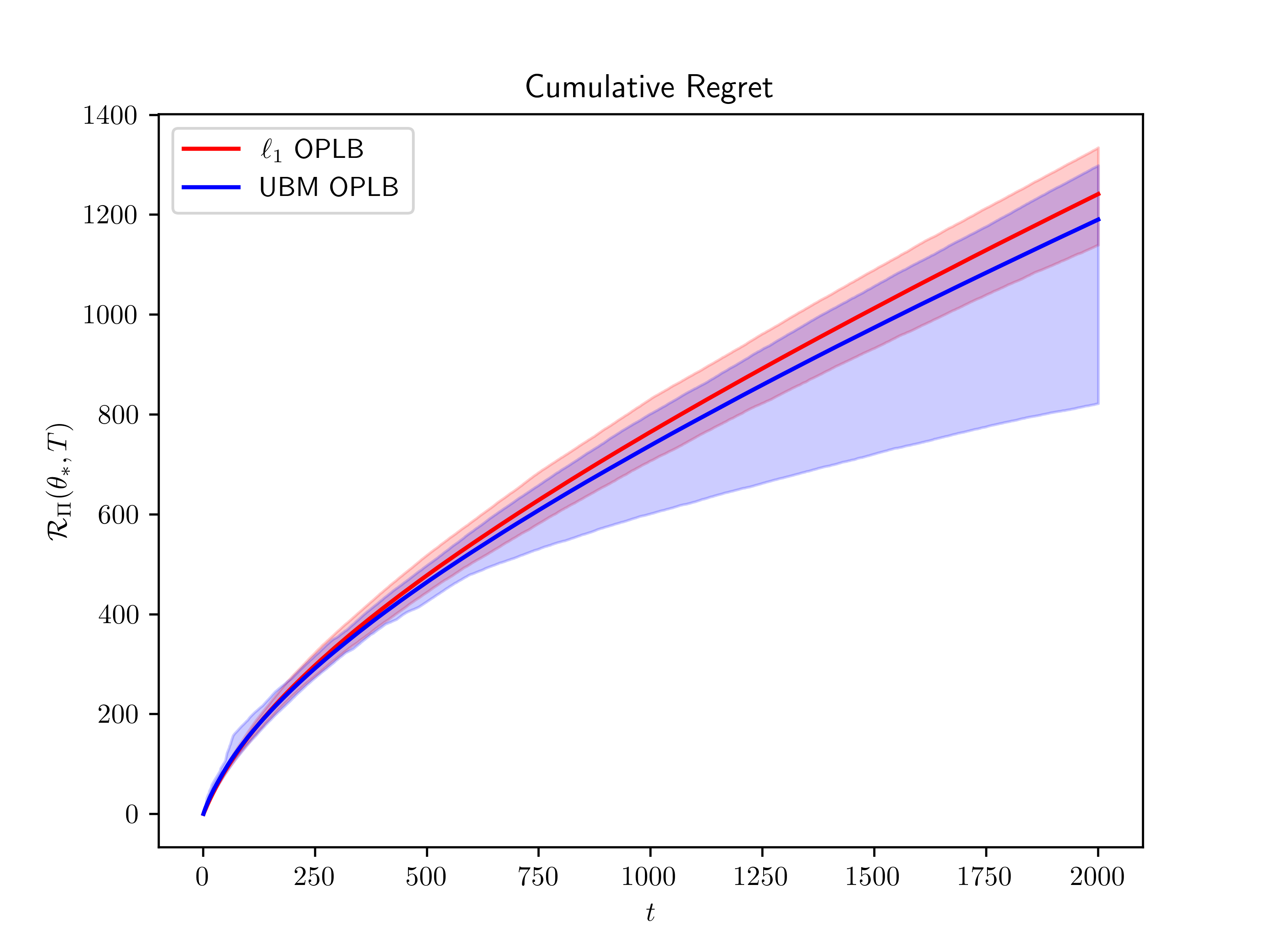}
 \includegraphics[width=.33\textwidth]{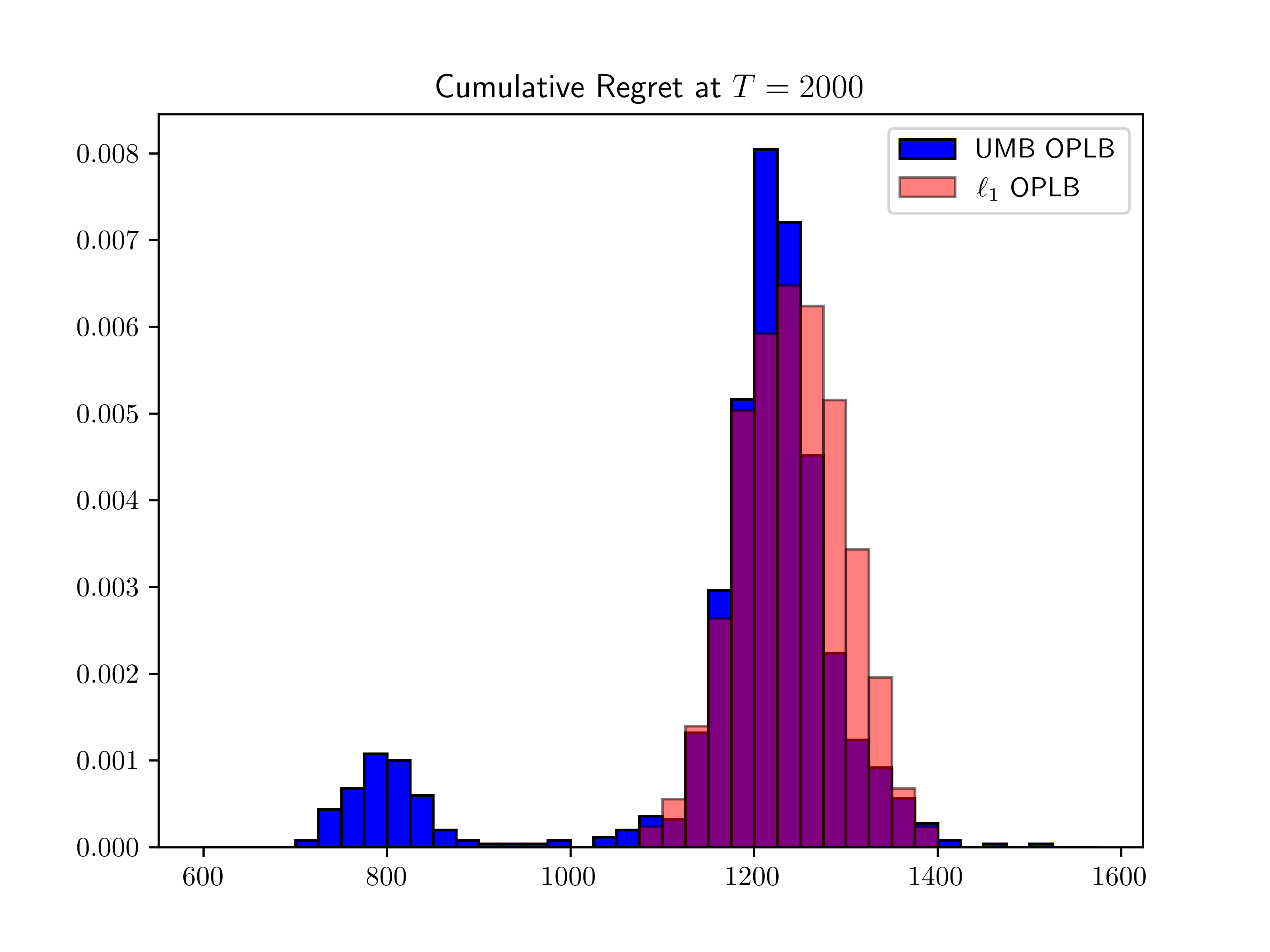}
  \includegraphics[width=.23\textwidth]{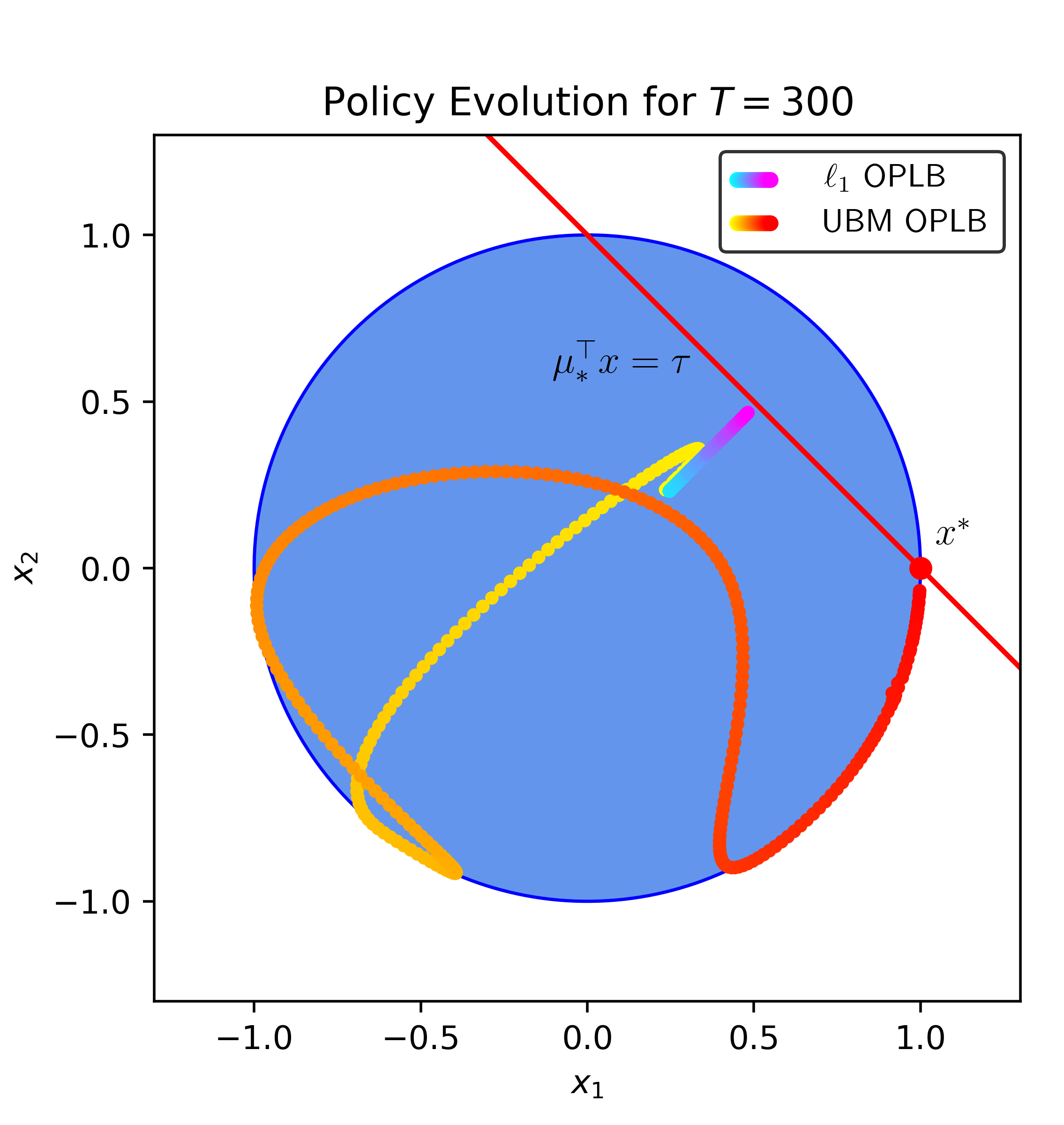}
  \caption{Left: Cumulative regret of \(\ell_1\) OPLB vs. OPLB with UBM. Middle: Histogram of cumulative regret at \(T=2000\). Right: Mean policy trajectory of \(\ell_1\) OPLB vs. UBM OPLB}
   \label{fig:cum_reg}
\end{figure*}


In the second experiment, we compare the cumulative regrets of Algorithms~\ref{alg:l1_oplb} and~\ref{alg:linear}, namely the \( \ell_1 \) OPLB and UBM OPLB. Figure~\ref{fig:cum_reg} (left) presents the cumulative regret of both algorithms given the parameters \( \theta_* = [3, 2.5]^\top \), \( \mu_* = [0.5, 0.5]^\top \), and \( \tau=0.5 \), with the decision set being the unit disk. The results indicate a marginally better cumulative regret for UBM OPLB. This plot reveals an interesting phenomenon: asymmetric confidence bands around the UBM OPLB's regret, with a lower confidence band that is notably further below the mean compared to the upper band. 
Further investigation into this observation is conducted by examining Figure~\ref{fig:cum_reg} (middle), which displays a histogram of the cumulative regrets for both algorithms at time \( t=2000 \) over \( N=1000 \) simulations. The histogram suggests that, although UBM OPLB's performance is largely in line with that of \( \ell_1 \) OPLB, it exhibits a secondary mode where the cumulative regret is substantially lower. This accounts for the observed lower confidence band in the first plot. In certain cases, UBM OPLB significantly outperforms \( \ell_1 \) OPLB. For a closer look at this behavior, we examine the mean policy trajectories of \( \ell_1 \) OPLB and UBM OPLB under the aforementioned superior performance. Figure~\ref{fig:cum_reg} (right) delineates these trajectories with the evolution from yellow to red and cyan to magenta, respectively, for a span of \( T=300 \) steps. Clearly, \( \ell_1 \) OPLB does not approach the optimal policy as closely as UBM OPLB, resulting in greater regret, whereas UBM OPLB tends toward the optimal policy, exhibiting minimal regret. Although this phenomenon is problem-specific and not universally observed, it presents an intriguing aspect for further research.

\bibliographystyle{IEEEtran}
\bibliography{refs.bib}
\end{document}